\newtheorem{assumption}{Assumption}
\newtheorem{lemma}{Lemma}
\newtheorem{proposition}{Proposition}
\newtheorem{definition}[section]{Definition}
\newtheorem{corollary}{Corollary}[section]
\title{Learning Fair Representations with Kolmogorov-Arnold Networks}
\author{
    Amisha Priyadarshini\textsuperscript{\rm 1}, Sergio Gago-Masague\textsuperscript{\rm 1}
}
\begin{document}

\maketitle

\begin{abstract}
Despite recent advances in fairness-aware machine learning, predictive models often exhibit discriminatory behavior towards marginalized groups. Such unfairness might arise from biased training data, model design, or representational disparities across groups, posing significant challenges in high-stakes decision-making domains such as college admissions. While existing fair learning models aim to mitigate bias, achieving an optimal trade-off between fairness and accuracy remains a challenge. Moreover, the reliance on black-box models hinders interpretability, limiting their applicability in socially sensitive domains.
\\To circumvent these issues, we propose integrating Kolmogorov-Arnold Networks (KANs) within a fair adversarial learning framework. Leveraging the adversarial robustness and interpretability of KANs, our approach facilitates stable adversarial learning. We derive theoretical insights into the spline-based KAN architecture that ensure stability during adversarial optimization. Additionally, an adaptive fairness penalty update mechanism is proposed to strike a balance between fairness and accuracy. We back these findings with empirical evidence on two real-world admissions datasets, demonstrating the proposed framework's efficiency in achieving fairness across sensitive attributes while preserving predictive performance.


\end{abstract}


\section{Introduction}

In recent years, the widespread adoption of Machine Learning (ML) models in high-stakes decision-making domains, such as college admissions, healthcare, and hiring, has underscored the need for ethically aligned, fairness-aware Artificial Intelligence (AI) systems. While modern deep learning (DL) models offer high predictive capacity, they remain vulnerable to amplifying historical bias in real-world datasets \cite{bickel1975sex}. In this context, undergraduate college admissions have undergone notable transformations, emphasizing the need for fairness and equal opportunity for all applicants. Recent shifts in admission policies, such as the University of California's Non-Discriminatory Policy \cite{nondp}, strive towards non-discriminatory review process with elimination of standardized testing. Although these approaches represent definite strides towards an equitable system, the socioeconomically marginalized groups continue to face systemic barriers in the admissions process \cite{chetty2023diversifying}. The disparity in resources and support available to these groups often result in unequal academic performance. This underscores the critical need for robust algorithmic approaches that ensure a balanced predictive performance while maintaining fairness across protected attributes to prevent disparate outcomes.
\\Various fair learning frameworks have been proposed in the past to address these concerns in ML models \cite{priyadarshini2024fair}, \cite{petrovic2022fair}. These include pre-processing, in-processing, and post-processing techniques aimed at mitigating bias \cite{mehrabi2021survey}. Given the sensitive nature of real-world datasets, particularly in the domain of college admissions where the broad spectrum of input features contributes towards a holistic evaluation \cite{holistic_eval}, choosing a fairness mechanism is a challenge. The pre-processing approaches modify the input distributions, and hence are unsuitable for the case. Similarly, post-processing techniques have a limited direct influence over the model's internal representations, which curbs their effectiveness and raises fairness concerns. This further emphasizes the need for in-processing methods, such as adversarial debiasing \cite{zhang2018mitigating}, that enforce fairness constraints directly within the training process, while preserving the integrity of the data. In this paper, we examine adversarial debiasing as a means to mitigate socioeconomic bias in our college admissions decision-making process.
\\To meet the goals of adversarial robustness and model interpretability, we adopt Kolmogorov-Arnold Networks (KANs) \cite{liu2024kan} - a novel architecture grounded in Kolmogorov-Arnold (KA) representation theorem \cite{schmidt2021kolmogorov}. Unlike traditional Multi-Layer Perceptron (MLP) models, which employ scalar weights and fixed activation functions, KANs leverage a composition of learnable univariate spline functions. This allows for a flexible function approximation with inherent model interpretability and smoothness. The edge-based spline parameterization offers fine-grained control over feature transformations, making them well-suited for structured tabular domains. In this work, we propose an adversarially trained KAN framework with an adaptive $\lambda$ update policy, aimed at balancing fairness metrics and predictive scores. We validate this framework across two different sets of real-world freshman applicants data to the University of California Irvine (UCI). Theoretical analysis and numerical experiments further demonstrate the efficiency of our proposed framework in achieving robust performance under fairness constraints while outperforming state-of-the-art (SOTA) baseline models.

\section{Problem Formulation}

 In the context of college admissions, several studies, \cite{bickel1975sex}, \cite{bhattacharya2017university}, depict the need for fairness-aware decision-making. Given the profound societal impact, the admissions process must uphold equitable treatment across socio-demographic groups. Historical bias embedded in the data often results in unfair outcomes, disproportionately impacting the marginalized communities \cite{gandara2024inside}. To systematically analyze such disparities, we formalize the admission decision task into a binary classification problem.
 \\In our formulation, the model is trained using $m$ examples $(x_i, z_i, y_i)_{i=1}^m$, where each of these is composed of a feature vector $x_i \in \mathbb{R}^n$, containing $n$ predictors, a binary sensitive attribute $z_i$, and a binary label $y_i$. These examples are sampled from a training distribution, say, $\Gamma = (X, Z, Y) \sim s$. Let $\mathcal{X}$, and $\mathcal{Y}$ be the feature space and label space respectively. 
 We aim to develop a framework that leverages KANs within an adversarial debiasing setup using an adaptive penalty update mechanism. The goal is to mitigate socioeconomic bias while maintaining predictive performance by effectively balancing the trade-off between fairness and accuracy. We try to approximate the predictor model, $f$, with parameters $\theta_f$ in a zero-sum game setup with an adversary, $g$, with parameters $\theta_g$. One of the key factors for algorithmic fairness is group fairness. For our case, we consider two types of fairness definitions: \textit{Demographic Parity and $p \%$-Rule}. In this section, we present the fairness notions and works that are necessary to ground our proposition.
     
       \subsection{Demographic Parity} 
       Demographic Parity (DP), also known as statistical parity, is satisfied when the model's predicted positive outcomes are equally distributed across different sensitive groups \cite{dwork2012fairness}. Formally for a binary sensitive attribute \( z \in \{0, 1\} \), DP requires the following conditions to hold:
       \begin{equation}
           \mathbb{E} \left[ \mathbb{I}(\hat{f}_{\theta_f}(x) = 1) \mid z = 0 \right] = \mathbb{E} \left[ \mathbb{I}(\hat{f}_{\theta_f}(x) = 1) \mid z = 1 \right]
       \end{equation}
        Here,  \( \hat{f}_{\theta_f}(x) = \mathbb{I}(f_{\theta_f}(x) > 0.5) \) denotes the predicted label after applying a decision threshold to the model output, and \( \mathbb{I}(\cdot) \) is the indicator function. Based on this criterion, the fairness-aware learning objective can be formulated as:
       \begin{equation}
       \begin{split}
           \arg\min_{\theta_f} \: \mathbb{E}_{(x,z,y)\sim s} \: \mathcal{L}_\mathcal{Y}(f_{\theta_f}(x), y)\\ 
           \text{s.t.} \quad |\mathbb{E}_s[\hat{f}_{\theta_f}(x)|z=1] - \mathbb{E}_s[\hat{f}_{\theta_f}(x)|z=0]| < \epsilon
        \end{split}
       \end{equation}
       where \( \epsilon \) is a small tolerance parameter that controls the allowable disparity between groups.




       \subsection{p$\% $-Rule}
       The \( p\% \)-Rule is a widely used group fairness metric that quantifies the disparity in the rates of positive outcomes across different demographic groups \cite{hardt2016equality}. For a binary sensitive attribute \( z \in \{0, 1\} \), the predicted positive outcome rates for both the groups can be defined as, $r_0 = \mathbb{E} \left[ \mathbb{I}(\hat{f}_{\theta_f}(x) = 1) \mid z = 0 \right]$, $r_1 = \mathbb{E} \left[ \mathbb{I}(\hat{f}_{\theta_f}(x) = 1) \mid z = 1 \right]$. 
       Then, the \( p\% \)-Rule requires:
        \begin{equation}
            \frac{\min(r_0, r_1)}{\max(r_0, r_1)} \geq \frac{p}{100}
        \end{equation}
        \\A higher value of the \( p\% \)-Rule (closer to 1 or 100$\%$) indicates improved fairness, as it implies similar treatment across groups. 

\subsection{Limitations of Existing Approaches}
In standard supervised learning, a neural network (NN) model, \( f\), is typically trained by minimizing the expected prediction loss, $\mathcal{L}_{\mathcal{Y}}$,
without incorporating any fairness constraints. When applied to real-world datasets that reflect historical bias, such models can inadvertently encode and even amplify disparities in predicted outcomes across sensitive groups. This violates principles of group fairness, particularly when the sensitive attribute, \( z \), is correlated with the outcome, \( y \).
\\Adversarial debiasing framework extends this formulation by introducing an adversary \( g_{\theta_g} \) that attempts to infer the sensitive attribute \( z \) from the predictor's output \( f_{\theta_f}(x) \). The predictor \( f_{\theta_f} \) is simultaneously trained to minimize classification loss while maximizing the adversary’s prediction error, resulting in the following optimization objective: 
\begin{equation}
\begin{split}
    \min_{\theta_f} \: \mathbb{E}_{(x, z, y) \sim s} \: \mathcal{L}_{\mathcal{Y}}(f_{\theta_f}(x), y) \quad \\
    \text{s.t.}\quad \min_{\theta_g} \: \mathbb{E}_{(x, z, y) \sim s} \: \mathcal{L}_{\mathcal{Z}}(g_{\theta_g}(f_{\theta_f}(x)), z) > \epsilon'
    \end{split}
\end{equation}
where, $\epsilon'$ denotes the fairness threshold, and \( \mathcal{L}_{\mathcal{Z}} \) represents the adversary loss.
To obtain a better balance between the predictions of the predictor and adversary, \cite{zhang2018mitigating} proposes a more relaxed formulation incorporating a fairness coefficient, $\lambda$:
\begin{equation}\label{EQ1-adv_deb}
    \begin{split}
    \min_{\theta_f}\max_{\theta_g}\mathbb{E}_{(x, z, y) \sim s} \: \mathcal{L}_{\mathcal{Y}}(f_{\theta_f}(x), y) \\
    - \lambda \cdot \mathbb{E}_{(x, z, y)\sim s} \: \mathcal{L}_{\mathcal{Z}}(g_{\theta_g}(f_{\theta_f}(x)), z) 
    \end{split}
\end{equation}
\begin{figure*}[t]
    \centering
    \includegraphics[width=\textwidth]{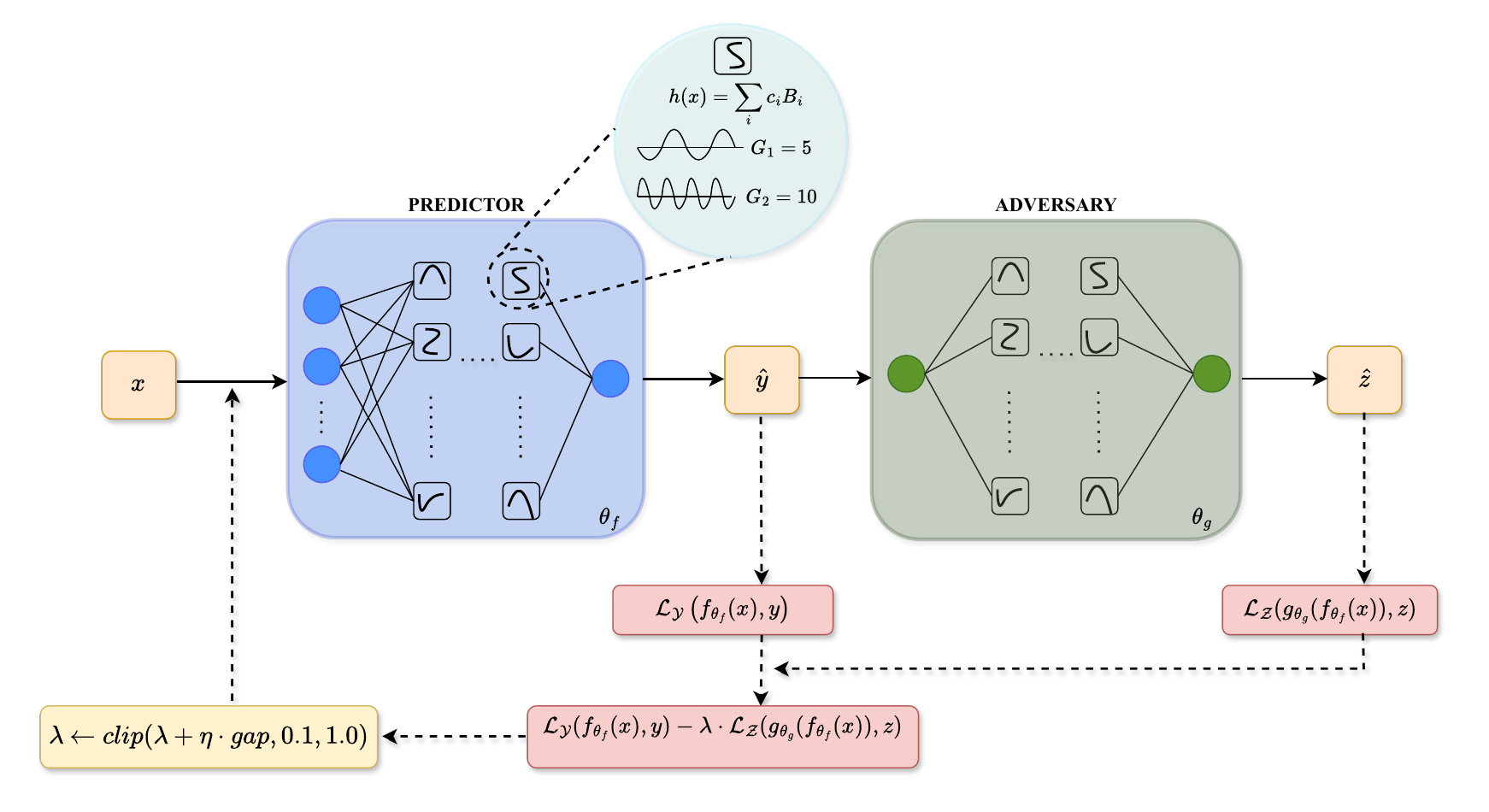}
    \caption{Schematic overview of the proposed adversarial debiasing framework using KANs. The classifier (KAN), $f$, learns predictive representations from input features, $x$, while an adversary (also a KAN), $g$, attempts to infer the sensitive attribute, $z$, from the classifier’s output, $y$, using a min-max objective. Fairness parameter, $\lambda$, is updated adaptively after every training epoch in an attempt to balance accuracy and fairness scores.}
    \label{fig:kan_framework}
\end{figure*}
\\where, \( \lambda \in \mathbb{R}^+ \) controls the degree of fairness for striking a balance between the tradeoff. 
\\While adversarial debiasing has proven effective in reducing group-level disparities, we discuss two persistent challenges that remain.
\textit{Firstly}, fairness constraints (such as DP) lead to provable loss in accuracy, especially when base rates differ across sensitive groups. According to \cite{zhao2022inherent}, any predictor that satisfies exact DP incurs an average classification error that is lower bounded by half the base rate gap between the groups. This formalizes the inherent fairness–accuracy tradeoff that can manifest as instability during model training. Moreover, enforcing strict criteria like DP can make models \textbf{less robust} to data perturbations or distribution shifts. The requirement to equalize output distributions across groups may lead to unreliable models, particularly in high-stake settings. \textit{Secondly}, while effective at mitigating group-level bias, the adversarial formulation suffers from a critical limitation: the learned neural predictor \( f_{\theta_f} \) remains a black-box function that lacks interpretability. Standard NNs do not provide explicit functional representations of feature-wise contributions, making it challenging to analyze decision logic, particularly in sensitive real-world settings such as college admissions.
\\In contrast, KAN's spline-based architecture and explicit functional decomposition offer a structured, interpretable, and robust alternative to traditional NN. Moreover, KAN retains the flexibility necessary for learning fair decision boundaries. In this study, we present a theoretical analysis and experimental backing demonstrating the suitability of KAN representations for adversarial learning setup.


\section{Related Work}

Several works, \cite{chetty2023diversifying}, \cite{barocas2023fairness}, \cite{zimdars2010fairness}, \cite{woo2023bias}, showcase the critical need for fair learning in the college admissions setup. Although various works like \cite{doleck2020predictive}, \cite{waters2014grade} demonstrate ML implementation in this context, adhering to fairness criteria or prioritizing the socioeconomically marginalized groups in the decision-making is still overlooked. This shortcoming is furthered by the reliance on DL models that are hard to interpret, leading to trust issues \cite{rudin2019stop}. In the context of college admissions, interpretability plays a significant role, given the direct impact of these decisions on applicants and their subject to public scrutiny.
\\Although fair-learning approaches like \cite{zhang2018mitigating}, \cite{madras2018learning}, \cite{lahoti2020fairness}, \cite{priyadarshini2025fairfusion}, implement adversarial learning to incorporate fairness constraints, they are defined in the context of DL setups. Even several mathematically grounded works such as, \cite{jia2024aligning} introducing Lipschitz-based fairness criterion, and \cite{chzhen2020fair} that leverage optimal mass transport for regression tasks, provide rigorous formulations but are mostly tailored to specific problem settings, limiting their applicability. Secondly, works like \cite{zhao2022inherent}, \cite{calders2009building}, \cite{kamiran2009classifying} further affirm the inherent fairness-accuracy tradeoffs, cited as one of the major problems in our case study. These underlying challenges motivate our proposed approach. While various works like \cite{kiamari2024gkan}, \cite{bodner2024convolutional}, \cite{vaca2024kolmogorov}, \cite{alter2024robustness} have explored KANs on various domains, to the best of our knowledge, our proposed work is the first to elaborate on KAN-based adversarial debiasing framework.


\section{Robust Adversarial Learning with KANs}
The Adversarial Debiasing \citep{zhang2018mitigating} framework adopts a zero-sum game approach, as shown in Eq. \ref{EQ1-adv_deb}. However, due to the inherent fairness-accuracy tradeoff, as discussed in \cite{zhao2022inherent}, improvements in fairness often comes at the cost of predictive performance. In particular, \cite{hardt2016equality} shows that enforcing DP can eliminate an otherwise perfect predictor, underscoring the importance of theoretically quantifying this tradeoff in classification tasks. Although several adversarial debiasing techniques based on deep NN models \cite{priyadarshini2024fair} have been proposed they often suffer from lack of interpretability. In contrast, we propose utilizing KAN as the learning model in an adversarial debiasing framework. Having demonstrated adversarial robustness \cite{alter2024robustness}, and structural interpretability, KAN models have a definite edge over MLPs. We further incorporate an adaptive $\lambda$-update policy in the KAN-based adversarial framework with various optimizer settings. This helps enhance the model’s robustness to fairness-accuracy trade-off. Despite the high complexity of the KAN representations, we further affirm their efficacy at generalization without overfitting to fairness constraints. An overview of the proposed adversarial learning framework is presented in Figure \ref{fig:kan_framework}.

\subsection{Preliminaries}

The KANs \cite{liu2024kan} are inspired by the KA Representation theorem \cite{kolmogorov1957representations}, which states that any multivariate continuous function can be written as a finite superposition of continuous univariate functions. A distinctive feature of these networks is their use of learnable univariate spline functions in place of the scalar weights and fixed activation functions of traditional MLPs. These characteristics improve the model's capacity for accurate and flexible approximations. A traditional KAN function approximator, $f:[0,1]^n \to \mathbb{R}$, can be expressed as:
\begin{equation}\label{eq:KAN}
    f(x_1, x_2, ..., x_n) = \sum_{q=1}^{2n+1} \Phi_q (\sum_{r=1}^{n} \phi_{q, r}(x_r))
\end{equation}
where the inner and outer univariate functions map as, $\phi_{q,r} :[0,1] \to \mathbb{R}$ and $\Phi_q : \mathbb{R} \to \mathbb{R}$ respectively. While the early critiques questioned the practical utility of the KA theorem due to the non-smoothness of inner functions \cite{girosi1989representation}, \cite{liu2024kan} demonstrate its viability in ML by leveraging smoother, sparser compositional structures, extending the network to a deep KAN architecture \cite{lin2017does}.
\\Given, all the functions to be learned are univariate, represented in Eq. \ref{eq:KAN}, each 1D function, $\phi_i$, is parametrized as a B-spline curve, say $h(x)$. It can be defined as $ h(x)= \sum_i c_i B_i(x)$, where $c_i$s are the learnable coefficients of the local B-spline basis functions. Each spline function is a piecewise-polynomial (typically cubic) function, and is $(k+1)$ (or, twice differentiable for cubic splines) times continuously differentiable \cite{girosi1989SPLINE}. In the following section, we discuss the structural properties of KANs that make them well-suited for fairness-aware adversarial learning with theoretical justifications.

\begin{assumption}\label{assumptions}
\textnormal{Given $\mathcal{X} \subset \mathbb{R}^n$ denotes the input feature space, and $Z \in \{0, 1\}$ represents the binary sensitive attribute, we impose the following conditions:}

    \begin{enumerate}
        \item \textnormal{The input space $\mathcal{X}$ is compact; that is, it is bounded and closed.}
        
        \item \textnormal{Each univariate spline function, $h(x)$, used in the KAN architecture is piecewise polynomial (usually cubic), and is twice continuously differentiable on its domain.}
        
    \end{enumerate}

\end{assumption}

\subsection{Theoretical Analysis}
    \begin{lemma}\label{lemma1}
        Each univariate spline function is Lipschitz continuous on a bounded range, is differentiable, and defined over a compact interval. Hence, $f$ is Lipschitz continuous on a bounded domain.
    \end{lemma}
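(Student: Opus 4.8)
The plan is to prove the claim in two stages: first establish that each univariate building block (the inner functions $\phi_{q,r}$ and the outer functions $\Phi_q$) is Lipschitz on the region where it is actually evaluated, and then lift this to the full network $f$ of Eq.~\ref{eq:KAN} using the fact that finite sums and compositions of Lipschitz maps are again Lipschitz. The closure of the Lipschitz class under these two operations is precisely what makes the compositional structure of KANs amenable to this argument.

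For a single spline $h(x)=\sum_i c_i B_i(x)$, I would argue as follows. By Assumption~\ref{assumptions}, $h$ is twice continuously differentiable on its compact domain, so $h'$ is continuous on a closed bounded interval; the Extreme Value Theorem then guarantees a finite bound $L_h=\sup_x|h'(x)|<\infty$, and the Mean Value Theorem upgrades this to $|h(x)-h(x')|\le L_h\,|x-x'|$. Hence every univariate spline is $L_h$-Lipschitz. This settles the inner functions $\phi_{q,r}$ directly, since their domain $[0,1]$ is already compact.

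The main obstacle is the outer functions $\Phi_q:\mathbb{R}\to\mathbb{R}$, whose declared domain is all of $\mathbb{R}$ rather than a compact interval — on an unbounded domain a spline's derivative need not be bounded, so it need not be globally Lipschitz. The resolution is to note that $\Phi_q$ is never evaluated on all of $\mathbb{R}$: its argument is the inner sum $u_q(x)=\sum_{r=1}^n\phi_{q,r}(x_r)$, which is continuous on the compact, connected domain $\mathcal{X}=[0,1]^n$ (Assumption~\ref{assumptions}). Its image $I_q=u_q(\mathcal{X})$ is therefore a compact interval, and restricting $\Phi_q$ to $I_q$ returns us to the compact-domain setting, so the same Extreme-Value/Mean-Value argument yields a finite Lipschitz constant $L_{\Phi_q}$ valid on the entire range seen during any forward pass.

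It then remains to assemble the pieces using the closure properties. The inner sum $u_q$ is Lipschitz with constant $\sum_r L_{\phi_{q,r}}$ as a finite sum of Lipschitz maps; composing with $\Phi_q$ restricted to $I_q$ multiplies Lipschitz constants, so each summand $\Phi_q\circ u_q$ is Lipschitz with constant $L_{\Phi_q}\sum_r L_{\phi_{q,r}}$; and summing over $q=1,\dots,2n+1$ adds these constants, producing a global constant $L_f=\sum_{q=1}^{2n+1}L_{\Phi_q}\sum_{r=1}^{n}L_{\phi_{q,r}}$. This shows $f$ is Lipschitz continuous on the compact domain $[0,1]^n$, as claimed.
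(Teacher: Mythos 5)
Your proof is correct and follows essentially the same route as the paper's: bound each spline's derivative on a compact interval, apply the Mean Value Theorem to obtain a Lipschitz constant, then use closure of the Lipschitz class under finite sums and compositions to lift the property to $f$. Where you go beyond the paper is in the treatment of the outer functions $\Phi_q : \mathbb{R} \to \mathbb{R}$. The paper simply asserts that every univariate spline is ``defined over a compact interval'' and then states the resulting Lipschitz inequality for all $x, x' \in \mathbb{R}$ --- which does not follow as written, since a piecewise-polynomial function on an unbounded domain need not have bounded derivative, and the compactness claim and the ``for all $x,x' \in \mathbb{R}$'' claim are in tension with each other. You resolve this cleanly: $\Phi_q$ is only ever evaluated on the image $u_q(\mathcal{X})$ of the inner sum, which is a compact interval because $u_q$ is continuous and $\mathcal{X}$ is compact and connected, so restricting $\Phi_q$ there recovers the compact-domain setting legitimately. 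That restriction step, the explicit appeal to the Extreme Value Theorem to justify the derivative bound before invoking the Mean Value Theorem, and the explicit constant $L_f = \sum_{q=1}^{2n+1} L_{\Phi_q} \sum_{r=1}^{n} L_{\phi_{q,r}}$ make your version airtight at precisely the points where the paper's argument is loose.
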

    \begin{proof}
    Given $h_i : \mathbb{R} \to \mathbb{R}$ denotes the univariate spline functions used in KAN architecture. Based on our Assumption \ref{assumptions}, each $h_i$ is a cubic spline continuous function that is $C^2$ smooth (i.e., twice differentiable), and defined over a compact interval ($\subset \mathbb{R}$). Hence, by the Mean Value Theorem, there exists a constant $L_i > 0$ such that:
    $$|h_i(x) - h_i(x')| \leq L_i . |x - x'|$$
    for all $x, x' \in \mathbb{R}$. As the overall KAN function, $f$, is a finite summation and composition of these univariate spline functions, it follows that $f$ itself is $L$-Lipschitz continuous on a bounded input domain \cite{rudin1976principles}. 
    \end{proof}
    
    \begin{algorithm}[t]
    \caption{Adversarial training with adaptive $\lambda$ policy}
    \label{alg:adaptive-kan}
    
    \textbf{Input}: Initialize KAN-based classifier $f$ and adversary $g$ with grid $\mathcal{G}$ and order $k$. Training data $(x, z, y)$, classifier learning rate $\eta_{\text{clf}}$, adversary learning rate $\eta_{\text{adv}}$, threshold $\tau$, fairness learning rate $\eta$.\\
    \textbf{Parameter}: Initialize fairness penalty weight $\lambda$ 
    \begin{algorithmic}[1]
    \FOR{each grid level $\mathcal{G}_i \in \mathcal{G}$}
        \IF{$i = 0$}
            \STATE Initialize $f$ and $g$ on grid $\mathcal{G}_0$
        \ELSE
            \STATE Initialize $f$ and $g$ from trained models on $\mathcal{G}_{i-1}$
        \ENDIF
        \STATE Freeze classifier and compute outputs $f(x)$
        \STATE \textbf{Train adversary:}
        \STATE \hspace{1em} Update $g$ to minimize $\mathcal{L}_\mathcal{Z}(g(f(x)), z)$
        \STATE \textbf{Train classifier with adversarial debiasing:}
        \FOR{each epoch $t = 1, \dots, T$}
            \STATE Update $f$ to minimize:
            \[
            \mathcal{L}_\mathcal{Y}(f(x), y) - \lambda \cdot \mathcal{L}_\mathcal{Z}(g(f(x)), z)
            \]
            \STATE Evaluate fairness metric: compute $p\%\text{-Rule}$
            \STATE Compute gap: $\delta \gets (\tau - p\%\text{-Rule})/\tau$
            \STATE Update penalty: $\lambda \gets \text{clip}(\lambda + \eta \cdot \delta, 0.1, 1.0)$
        \ENDFOR
    \ENDFOR
    \end{algorithmic}
    \end{algorithm}
    
    \begin{lemma}\label{lemma2}
        Every univariate spline function, in KAN is $\beta_i$-smooth, and the finite sum of spline functions preserves the smoothness. Hence, $f$ is $\beta$-smooth.
    \end{lemma}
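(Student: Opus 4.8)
The plan is to establish $\beta$-smoothness in three stages mirroring the structure of Eq.~\ref{eq:KAN}: first for individual univariate splines, then for finite sums, and finally for the compositional structure $\Phi_q \circ (\sum_r \phi_{q,r})$ that defines $f$. Throughout I use the characterization that a function is $\beta$-smooth iff its gradient is $\beta$-Lipschitz, which for a twice-differentiable map is equivalent to a uniform operator-norm bound on the Hessian (in the univariate case, a bound on the second derivative).

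First I would show each univariate spline $h_i$ is $\beta_i$-smooth. By Assumption~\ref{assumptions}, each $h_i$ is $C^2$ on a compact interval, so $h_i''$ is continuous on a compact set and hence bounded by the Extreme Value Theorem: $|h_i''(x)| \leq \beta_i$. Applying the Mean Value Theorem to $h_i'$ then yields $|h_i'(x) - h_i'(x')| \leq \beta_i\,|x - x'|$, which is precisely $\beta_i$-smoothness. The finite-sum step is routine: if $g_1,g_2$ are $\beta_1$- and $\beta_2$-smooth, the triangle inequality on their gradients gives that $g_1+g_2$ is $(\beta_1+\beta_2)$-smooth. Inductively, the inner aggregation $\psi_q(x) = \sum_{r=1}^n \phi_{q,r}(x_r)$ is smooth with constant $\sum_r \beta_{q,r}$; its Hessian is in fact diagonal with entries $\phi_{q,r}''(x_r)$, so its operator norm is controlled directly.

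The main obstacle is the composition step, since smoothness is \emph{not} preserved under composition for free: it requires the Lipschitz bounds from Lemma~\ref{lemma1} together with the second-derivative bounds above. Differentiating $\Phi_q \circ \psi_q$ twice gives
$$\nabla^2(\Phi_q \circ \psi_q) = \Phi_q''(\psi_q)\,\nabla\psi_q\,(\nabla\psi_q)^\top + \Phi_q'(\psi_q)\,\nabla^2\psi_q,$$
so the Hessian norm is bounded by $|\Phi_q''|\,\|\nabla\psi_q\|^2 + |\Phi_q'|\,\|\nabla^2\psi_q\|$. Each factor is finite on the compact domain $\mathcal{X}$: $|\Phi_q'|$ and $\|\nabla\psi_q\|$ are controlled by the Lipschitz constants supplied by Lemma~\ref{lemma1}, while $|\Phi_q''|$ and $\|\nabla^2\psi_q\|$ are controlled by the smoothness constants from the first two stages.

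Finally, summing over the $2n+1$ outer terms and invoking the finite-sum rule once more, I would conclude that $f = \sum_q \Phi_q \circ \psi_q$ is $\beta$-smooth with $\beta = \sum_q \bigl(|\Phi_q''|\,L_{\psi_q}^2 + L_{\Phi_q}\,\beta_{\psi_q}\bigr)$, where the $L$'s are the Lipschitz constants from Lemma~\ref{lemma1} and the $\beta$'s the per-component smoothness constants. The crux is that compactness of $\mathcal{X}$ and the $C^2$ hypothesis jointly guarantee all these constants are finite, so the quadratic-in-Lipschitz blow-up from composition remains bounded.
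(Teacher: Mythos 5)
Your proof is correct and follows essentially the same route as the paper's: the per-spline bound via bounded second derivatives, the finite-sum rule, and the chain-rule second-derivative bound for compositions (your Hessian identity $\Phi_q''(\psi_q)\,\nabla\psi_q(\nabla\psi_q)^\top + \Phi_q'(\psi_q)\,\nabla^2\psi_q$ is the multivariate form of exactly the computation the paper carries out in its Appendix Propositions 1 and 2). If anything, your write-up is more careful than the paper's main-text proof, since you explicitly invoke compactness for the boundedness of $h_i''$ and flag that composition alone does not preserve smoothness without the Lipschitz constants from Lemma \ref{lemma1}, a dependency the paper only makes visible in the Appendix.
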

    \begin{proof}
    Given, $h_i$ is twice differentiable implies $|h_i''(x)|\le \beta_i$ for all $x \in \mathcal{X}$, for some $\beta_i \in [0,\infty)$. This proves that $h_i$ is $\beta_i$-smooth. Provided the overall KAN function, $f$, is constructed by combining these univariate spline functions through finite summations and compositions, we note two important properties: \textit{firstly}, the finite sum of spline functions preserves smoothness, and \textit{secondly}, the composition of smooth functions preserves smoothness \cite{rudin1976principles}. Hence, it implies that $f$ is $\beta$-smooth. A full justification of the said properties has been provided in the Appendix.
    \end{proof}
    As shown in Lemma \ref{lemma2}, the KAN function, $f$, inherits the smoothness of its constituent univariate functions, and remains $\beta$-smooth on the bounded domain, $\mathcal{X}$. This ensures stable convergence during optimization owing to bounded variations in gradients \cite{schmidt2021kolmogorov}. Furthermore, Lemma \ref{lemma1} proves Lipschitz continuity of KAN models, hence ensuring model robustness to small input perturbations \cite{alter2024robustness}. These structural properties play a crucial role in stabilizing the adversarial training dynamics and ensure effective debiasing. Further mathematical evidence suggesting the efficiency of KAN structure in an adversarial debiasing setup has been provided in the Appendix.



    \subsubsection{Adaptive Penalty Mechanism}

    One of the key challenges in an adversarial debiasing setup lies in balancing fairness and predictive performance, which is governed by the fairness coefficient $\lambda$. It scales the adversarial loss relative to the predictive loss. However, in a non-adaptive setup, determining $\lambda$ is non-trivial. The process requires extensive manual tuning that is computationally expensive \cite{sattigeri2019fairness}, especially given the high network complexity of KANs. To address these limitations, we introduce an adaptive $\lambda$ update mechanism that dynamically adjusts the fairness penalty during training. At each training epoch, we compute the fairness gap and update the penalty weights in proportion to this gap. The mechanism follows a linear rule with a learning rate, $\eta$, and is clipped within a safe range to ensure stability in trade-off as shown in Algorithm \ref{alg:adaptive-kan}. To further affirm our claims, we empirically demonstrate the effectiveness of our approach in the following section.


\section{Numerical Experiments}
\begin{table*}[t]
\centering

\begin{subtable}[t]{0.65\textwidth}
\centering
\caption{}
\label{tab1.1}
\renewcommand{\arraystretch}{0.9}
\begin{tabular}{l|l|c|c|c|c|c|c}
\toprule
\textbf{Model} & \textbf{Optimizer} & \textbf{Acc} & \textbf{AUROC} & \textbf{$p\%\text{-Rule}_{(1)}$} & \textbf{$p\%\text{-Rule}_{(2)}$} & \textbf{$\text{DP}_{(1)}$} & \textbf{$\text{DP}_{(2)}$} \\
\midrule
 & Adam   & 74.92 & 80.27 & 85.79 & 89.65 & 0.028 & 0.035 \\
$\text{K}_{(1)}$ & OAdam  & 74.47 & 76.86 & 90.39 & 92.99 & 0.072 & 0.084 \\
 & ADOPT  & \textbf{74.58} & \textbf{81.45} & \textbf{89.21} & \textbf{90.92} & \textbf{0.047} & \textbf{0.055} \\
\midrule
 & Adam   & 82.36 & 85.60 & 84.11 & 81.22 & 0.037 & 0.043 \\
$\text{K}_{(2)}$ & OAdam  & 81.69 & 86.26 & 99.25 & 99.55 & 0.026 & 0.03 \\
 & ADOPT  & \textbf{82.51} & \textbf{86.68} & \textbf{99.25} & \textbf{99.70} & \textbf{0.032} & \textbf{0.037} \\
 \midrule
 & Adam   & 72.55 & 72.53 & 98.32 & 99.61 & 0.009 & 0.011 \\
$\text{B}_{(1)}$ & OAdam  & 74.13 & 74.13 & 98.52 & 95.04 & 0.004 & 0.018 \\
 & ADOPT  & 70.19 & 70.20 & 97.15 & 93.50 & 0.019 & 0.022 \\
 \midrule
  & Adam   & 74.87 & 71.77 & 93.97 & 97.79 & 0.01 & 0.004 \\
$\text{B}_{(1)}'$ & OAdam  & 78.61 & 63.05 & 96.56 & 97.06 & 0.006 & 0.015 \\
 & ADOPT  & 77.75 & 63.45 & 90.64 & 99.0 & 0.006 & 0.004 \\
 \midrule
$\text{B}_{(2)}$ & Adam  & 72.44 & 76.12 & 95.22 & 97.59 & 0.024 & 0.012 \\ 
$\text{B}_{(2)}'$ & Adam  & 79.12 & 64.47 & 98.00 & 88.84 & 0.003 & 0.018 \\
\midrule
$\text{B}_{(3)}$ & Adam  & 74.92 & 80.95 & 83.10 & 83.91 & 0.085 & 0.08 \\
$\text{B}_{(3)}'$ & Adam  & 78.36 & 81.99 & 97.72 & 93.29 & 0.004 & 0.014 \\
\bottomrule
\end{tabular}
\end{subtable}
\hfill
\begin{subtable}[t]{0.27\textwidth}
\centering
\caption{}
\label{tab1.2}
\renewcommand{\arraystretch}{1.0}


\begin{tabular}{ll}
\toprule
$\text{K}_{(1)}$   & \begin{tabular}[t]{@{}l@{}}KAN (trained\\on $\text{D}_{(1)}$)\end{tabular} \\
$\text{K}_{(2)}$   & \begin{tabular}[t]{@{}l@{}}KAN (trained\\on $\text{D}_{(2)}$)\end{tabular} \\
$\text{B}_{(1)}$   & \begin{tabular}[t]{@{}l@{}}trained on\\$\text{D}_{(1)}$\end{tabular} \\
$\text{B}_{(1)}'$  & \begin{tabular}[t]{@{}l@{}}trained on\\$\text{D}_{(2)}$\end{tabular} \\
$\text{B}_{(2)}$   & \begin{tabular}[t]{@{}l@{}}trained on\\$\text{D}_{(1)}$\end{tabular} \\
$\text{B}_{(2)}'$  & \begin{tabular}[t]{@{}l@{}}trained on\\$\text{D}_{(2)}$\end{tabular} \\
$p\%\text{-Rule}_{(1)}$ &
  \begin{tabular}[t]{@{}l@{}}
    $p$\%-Rule\\
    (Low Inc.)
  \end{tabular} \\

$p\%\text{-Rule}_{(2)}$ &
  \begin{tabular}[t]{@{}l@{}}
    $p$\%-Rule\\
    (First Gen.)
  \end{tabular} \\

$\text{DP}_{(1)}$ &
  \begin{tabular}[t]{@{}l@{}}
    DP Gap\\
    (Low Inc.)
  \end{tabular} \\

$\text{DP}_{(2)}$ &
  \begin{tabular}[t]{@{}l@{}}
    DP Gap\\
    (First Gen.)
  \end{tabular} \\

\bottomrule
\end{tabular}

\end{subtable}

\caption{(a) Performance and fairness comparison across proposed frameworks, and baseline models, trained on two distinct datasets, $D_{(1)}$ and $D_{(2)}$, under fairness constraints with three different optimizers, Adam, OAdam, and ADOPT. (b) Notation reference for model identifiers and fairness metric used in (a).}
\label{tab1:kan_main_results}

\end{table*}

\subsection{Simulation Setup}

Various works, \cite{marcinkowski2020implications}, \cite{alvero2020ai}, have documented socioeconomic factors to significantly influence college admission outcomes. To empirically assess the efficacy of our proposed method in mitigating such bias, we conduct experiments on two distinct college admissions dataset collected from freshman applicants to the Dept. of Computer Science, UCI. The first dataset, denoted as $D_{(1)}$, comprises of 4,442 application records from the Fall 2021 admission cycle, and the second dataset, denoted as $D_{(2)}$, includes 26,243 application records spanning admission cycles from Fall 2018 to Fall 2024. Each of the datasets contain approximately 140 features, encompassing demographic, academic records, high school information, and essay question responses \cite{priyadarshini2023admission}. The broad spectrum of features facilitates \textit{holistic evaluation}, a comprehensive approach to college admissions that ensures applicants are assessed based on a wide range of criteria. For the study, after a systematic analysis of input features, we consider two binary sensitive attributes: low-income status and first-generation flag, which directly affect the socioeconomic demographics. We justify this choice by empirically demonstrating the presence of bias in model predictions using Kernel Density Estimation (KDE) plots, which are included in the Appendix. Furthermore, we use the final read score as the target variable for the binary classification task.
To quantitatively evaluate model fairness, we utilize two fairness metrics, $p\%$-Rule, and DP. In addition, to assess the predictive performance of our model, we employ Accuracy and AUROC scores. Next, we incorporate three different types of optimizers, namely Adam \cite{kingma2014adam}, Optimistic Adam (OAdam) \cite{daskalakis2017training}, and ADOPT \cite{taniguchi2024adopt}, for each training configuration. Despite KAN's structural robustness and theoretical guarantees in adversarial setting, we empirically explore different optimization strategies to enhance model convergence and stability during the debiasing process. Further discussions have been included in the Appendix.

\subsection{Baseline Models}
To evaluate the effectiveness of our proposed framework, we compare our approach against three baseline models: 
\begin{itemize}
    \item Baseline $B_{(1)}$ is a standard adversarial debiasing framework using a fully connected feedforward neural network (FFNN) as the classifier and adversary models \cite{priyadarshini2024fair},
    \item Baseline $B_{(2)}$ is a SOTA implementation of exponential gradient-based debiasing model \cite{agarwal2018reductions},
    \item Baseline $B_{(3)}$ is a SOTA method based on the ROAD (Robust Optimization for Adversarial Debiasing) framework \cite{grari2023fairnessROAD}.
\end{itemize}

\subsection{Simulation Results and Discussion}
In this section, we provide a comprehensive comparison of our proposed KAN-based adversarial debiasing framework with an adaptive $\lambda$ update mechanism across the two datasets, $D_{(1)}$ and $D_{(2)}$, using three distinct optimizer settings. In the experimental setup, as showcased in Table \ref{tab1:kan_main_results}, we use a fixed B-spline order of $k=3$. This spline order choice is consistent with \cite{liu2024kan}, that highlight it as striking an effective balance between expressiveness, smoothness, and training stability. Table \ref{tab1.1} reports the key performance metrics, including classification accuracy, AUROC, $p\%$-Rule, and DP across the two binary sensitive attributes, and Table \ref{tab1.2} lists the necessary notations.
\\On training the KAN models on $D_{(1)}$ dataset, we observe the Adam-based model to offer stronger predictive power (\textit{$0.45\%$ incr. in Acc., $3.41\%$ incr. in AUROC}) compared to OAdam, which enhances fairness outcomes significantly. However, the model trained using ADOPT exhibits a balanced performance across fairness and accuracy. On the other hand, training the KAN models on the larger $D_{(2)}$ dataset yields notable improvements in predictive performance and fairness. Although all setups trained on $D_{(2)}$ show significant performance improvement, the ADOPT-based model stands out by achieving balanced optimization between accuracy (\textit{$0.15\%$, and $0.82\%$ incr. in Acc. compared to Adam setup}) and fairness metrics (\textit{$\{15.14\%, 18.48\%\}$ incr. in $p\%$-Rule compared to Adam setup}). Meanwhile, the $B_{(1)}$ and $B_{(1)}'$ models under-perform relative to the KAN models, especially on the $D_{(2)}$ dataset. This suggests KAN-based adversarial learning to be more effective in maintaining fairness-accuracy trade-offs while capturing complex patterns, even in a feature-rich setting. SOTA baseline models, $B_{(2)}$ and $B_{(3)}$, display varying behavior. $B_{(2)}$ achieves high fairness on $D_{(1)}$ but suffers a drop on $D_{(2)}$, whereas $B_{(3)}$ starts with lower fairness on $D_{(1)}$ but improves substantially on $D_{(2)}$ with proper fine-tuning.

The empirical evidence demonstrates that the KAN-based adversarial debiasing framework, regardless of the choice of optimizer, consistently outperforms the baseline models across both the datasets. This can be largely attributed to the adaptive policy usage within the training module. Its ability to dynamically adjust and bound the fairness constraint, $\lambda$, achieves to secure a balanced performance. Amongst all the configurations, the model trained on ADOPT demonstrates comparative superior balanced performance. This observation suggests ADOPT to be well suited to the spline-based architecture of KANs, owing to their adaptive convergence behavior \cite{taniguchi2024adopt}. We highlight the frameworks that demonstrate a balanced fairness-accuracy trade-off in Table \ref{tab1:kan_main_results} for clarity.

\begin{figure*}[t]
    \centering
    \includegraphics[width=0.62\textwidth]{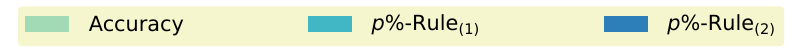}

    \begin{subfigure}[t]{0.55\textwidth}  
        \centering
        \includegraphics[width=\linewidth, height=3.2cm]{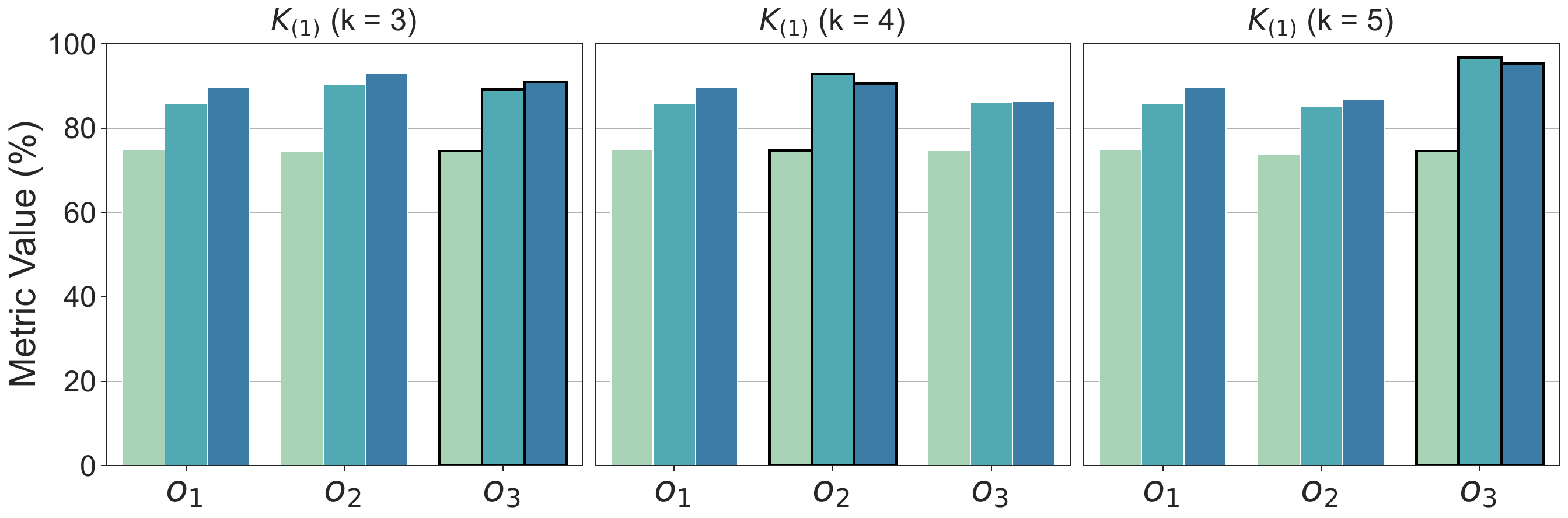}
        \caption{}
        \label{fig:sub_a}
    \end{subfigure}
    \hspace{0.005\textwidth}  
    \begin{subfigure}[t]{0.43\textwidth}  
        \centering
        \includegraphics[width=\linewidth, height=3.2cm]{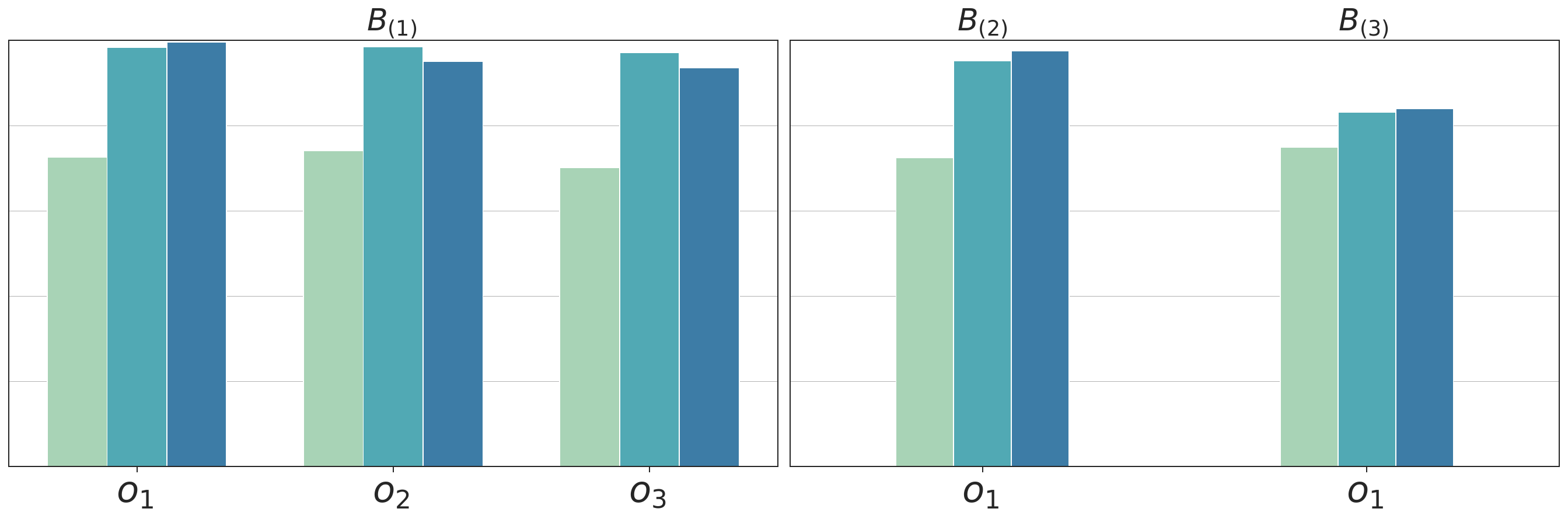}
        \caption{}
        \label{fig:sub_b}
    \end{subfigure}
    \caption{Ablation study to compare predictive performance and fairness scores across proposed frameworks with different spline knot complexities, and baseline models under varying optimization strategies, evaluated on dataset $D_{(1)}$. (a) Illustrates metrics of the proposed KAN-based adversarial frameworks using three different optimizer techniques ($o_1$: Adam, $o_2$: OAdam, $o_3$: ADOPT). Similarly, (b) depicts the Baseline model performance across varying optimizers.}
    \label{fig:side_by_side}
\end{figure*}

\begin{figure*}[t]
    \centering
    \hspace*{0.02\textwidth}
    \includegraphics[width=0.72\textwidth]{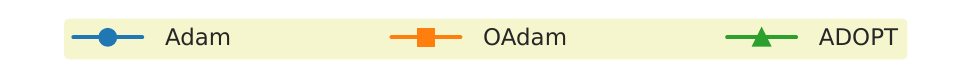}
        \includegraphics[width=\linewidth, height=4.7cm]{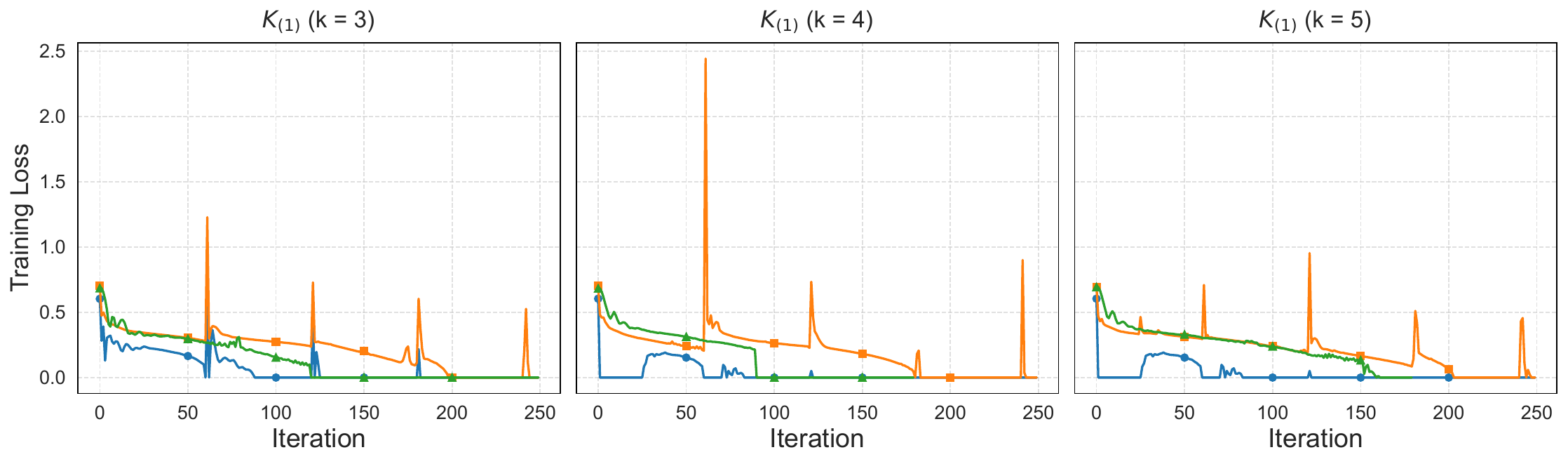}
        \caption{Training Loss convergence of the KAN-based adversarial learning framework across three different spline knot orders ($k \in \{3,4,5 \}$), under three different optimization techniques. Each plot illustrates the convergence behavior of the model, highlighting the impact of optimizer choice and KAN model's spline complexity on training stability.}
        \label{fig:loss vs itr}

\end{figure*}

\subsection{Ablation Study}
To investigate how KAN spline knot complexity, $k$, affects the adversarial learning process, we perform an ablation study focusing on three spline orders $k = \{3,4,5 \}$. We chose the $D_{(1)}$ dataset for the experiments due to its controlled small-scale setting. It enables focused assessment of the model's sensitivity to architectural changes under constrained data conditions. In addition to the KAN-based adversarial learning models, $K_{(1)}$, we also train the baseline model, $B_{(1)}$, across the three optimizer settings, and the SOTA baseline models, $B_{(2)}$, and $B_{(3)}$ across the Adam optimizer. As illustrated in Figure \ref{fig:sub_a}, increasing the spline order $k$ leads to subtle variations in accuracy and fairness metrics. Although the $K_{(1)}$ models exhibit relatively stable accuracy across different $k$ setups, the fairness metrics, measured via the $p\%\text{-Rule}$, tend to improve with higher spline orders. This suggests that more expressive spline functions, in an adversarial setting, can better model fair representations. Although higher $k$ (e.g., $k=4, 5$) may introduce convergence instability, as observed in Figure \ref{fig:loss vs itr}, the model trained with ADOPT exhibits comparatively stable training dynamics than those trained on Adam or OAdam. 
Next, when comparing the $K_{(1)}$ models with the baseline MLP models, $B_{(1)}$, we observe that the former consistently outperforms the latter in achieving a balance between accuracy and fairness. As observed in Figure \ref{fig:sub_b}, $B_{(1)}$ models have a higher fairness score but comparatively lower accuracy across the three optimizer settings. This further empirically substantiates the efficiency of the adaptive $\lambda$ update policy. The SOTA baseline models, $B_{(2)}$ and $B_{(3)}$, show strong performance in fairness but incur slight drops in accuracy as seen in Figure \ref{fig:sub_b}. As seen in Figure \ref{fig:loss vs itr}, the $K_{(1)}$ model training loss curve exhibits pronounced spikes. This could be attributed to the smoothness-dimension trade-off \cite{samadi2024smooth} in KANs. While this can induce oscillatory behavior during training, it is inherent to preserving the universal approximation. A formal discussion of this property is provided in the Appendix. Despite the fluctuations in KAN-based setups, our findings help us empirically establish the efficiency of the proposed adversarial learning framework.

\section{Conclusion and Future Work}
In this study, we propose a KAN-based adversarial learning framework with adaptive $\lambda$ update policy to mitigate socioeconomic bias in a college admission setting. We discuss the various structural properties of KAN, providing proper mathematical justification behind the adversarial robustness of the networks. Through extensive experiments on two distinct real-world admissions datasets, we demonstrate the efficiency of our proposed framework by consistently outperforming SOTA baselines while achieving a balance between the inherent accuracy-fairness tradeoff.
While our proposed approach effectively mitigates output-level bias, it opens up several promising enhancements. The incorporation of explicit feature-level bias detection through a reweighting mechanism could improve the overall fairness of the model while also helping to adhere to any data privacy constraints. Additionally, extending the current framework by incorporating alternative fairness definitions beyond DP could provide new insights.
We believe this work offers a promising step towards aligning algorithmic performance with social responsibility, encouraging future explorations into KAN-based fairness-aware systems that could uphold both ethical and performance standards in critical decision-making.


\section{Acknowledgments}
We gratefully acknowledge the University of California, Irvine, for providing access to the admissions datasets used in this study. We also thank Prof. Iftekhar Ahmed for providing access to GPU resources used for large-scale experiments.

\bibliography{aaai2026}

\clearpage
\onecolumn
\appendix

\section{Additional Theoretical Analysis}

\subsection{KAN Structural Validations}
In the subsection of Theoretical Analysis we propose that the KAN function, $f$, is L-Lipschitz continuous and $\beta$-smooth. Let Assumption \ref{assumptions} hold. Then, we provide supporting proofs in assertion of Lemma \ref{lemma2} that states KAN models, $f$, inherit the $\beta$-smoothness from its corresponding univariate spline functions. 
\begin{proposition}\label{prop 1}
    Finite sum of spline functions preserves the smoothness.
\end{proposition}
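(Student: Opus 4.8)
The plan is to work directly from the definition of $\beta$-smoothness as a Lipschitz bound on the first derivative, equivalently a uniform bound on the second derivative for the twice-differentiable splines at hand. By Lemma \ref{lemma2} and Assumption \ref{assumptions}, each univariate spline $h_i$ is $C^2$ and $\beta_i$-smooth, so that $|h_i''(x)| \le \beta_i$ for all $x$ in the compact domain. I would let $S(x) = \sum_{i=1}^{m} h_i(x)$ denote a finite sum of such splines and aim to produce an explicit smoothness constant for $S$.

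First I would invoke linearity of differentiation, which applies because each $h_i$ is twice continuously differentiable and the index set is finite, to write $S''(x) = \sum_{i=1}^{m} h_i''(x)$. Applying the triangle inequality then gives $|S''(x)| \le \sum_{i=1}^{m} |h_i''(x)| \le \sum_{i=1}^{m} \beta_i$ uniformly in $x$. Setting $\beta := \sum_{i=1}^{m} \beta_i < \infty$ yields $|S''(x)| \le \beta$, which is exactly the statement that $S$ is $\beta$-smooth.

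Alternatively, to match the gradient-Lipschitz formulation, I would bound the first derivative directly: for any $x, x'$ in the domain, $|S'(x) - S'(x')| \le \sum_{i=1}^{m} |h_i'(x) - h_i'(x')| \le \left(\sum_{i=1}^{m} \beta_i\right) |x - x'|$, again by the triangle inequality together with the $\beta_i$-smoothness of each summand. Both routes deliver the same constant $\beta = \sum_{i=1}^{m} \beta_i$, and this mirrors the argument already used in Lemma \ref{lemma1} for Lipschitz continuity.

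I do not expect a genuine obstacle here, since the result reduces to linearity of differentiation plus the triangle inequality. The only point requiring care is the \emph{finiteness} of the summation, which both guarantees that the accumulated constant $\beta$ stays finite and legitimizes interchanging summation and differentiation term by term. This finiteness is precisely the hypothesis built into the proposition's statement, so no additional condition is needed.
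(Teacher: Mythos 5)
Your proof is correct, but it takes a genuinely different and more modular route than the paper's. You isolate exactly what the proposition asserts: for a finite sum $S=\sum_{i=1}^m h_i$ with $|h_i''|\le\beta_i$, linearity of differentiation plus the triangle inequality give $|S''(x)|\le\sum_{i=1}^m\beta_i$, i.e. $S$ is $\beta$-smooth with the explicit constant $\beta=\sum_i\beta_i$. The paper instead proves the claim \emph{in situ} on the KAN architecture: it writes $f(x)=\sum_{k=1}^{2n+1}\psi_k(v_k(x))$ with $v_k(x)=\sum_i\phi_{k_i}(x_i)$, differentiates through the composition via the chain rule, $\partial f/\partial x_j=\sum_k \psi_k'(v_k(x))\,\phi_{k_j}'(x_j)$, and bounds the second partials--so its ``sum'' proposition already absorbs the composition step that is nominally deferred to Proposition 2, and it works in the multivariate setting. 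Your version buys a cleaner separation of concerns and an explicit constant; the paper's buys a statement that directly matches how Lemma 2 consumes it. Two small repairs would let yours serve that role equally well. First, do not cite Lemma 2 for the bound $|h_i''|\le\beta_i$: Lemma 2's conclusion rests on this proposition, so the citation reads circularly; derive the bound from Assumption 1 instead, since a continuous second derivative on a compact interval is automatically bounded. Second, your summands are all functions of one common scalar variable, whereas the inner KAN sums are $\sum_i\phi_{k_i}(x_i)$ over \emph{distinct} coordinates; the argument extends verbatim--viewing each $\phi_{k_i}(x_i)$ as a function of the full vector $x$, the Hessian of the sum is diagonal with entries $\phi_{k_i}''(x_i)$, so the gradient is Lipschitz with constant at most $\max_i\beta_i\le\sum_i\beta_i$--but that multivariate reading should be stated explicitly, since it is the case Lemma 2 actually needs.
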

\begin{proof}
    As provided in Eq. \ref{eq:KAN}, $f$ is composed of inner functions $\phi$ ($=h(.)$), and outer functions, $\psi$, where each of them is a smooth spline. Let the inner sum be defined as , $v_k(x) = \sum_{i=1}^n \phi_{k_i}(x_i)$ Then we can write $f$ as, 
    $$f(x_1, \dots, x_n) = \sum_{k=1}^{2n+1} \psi_k(v_k(x))$$ 
    \\Now, as each $\phi_{k_i}$ is $C^2$, we can say, 
    $$\left| \frac{\partial v_k}{\partial x_j} \right| = \left| \phi_{k_j}'(x_j) \right| \leq B$$ 
    for some constant $B$. The partial derivatives of $f$ with respect to $x_j$ is:
    $$\frac{\partial f}{\partial x_j} = \sum_{k=1}^{2n+1} \psi_k'\left( v_k(x) \right) \cdot \phi_{k_j}'(x_j)$$ 
    where both $\psi_k'$ and $\phi_{k_j}'$ are continuous and bounded. Differentiating again, the second partials exist and are bounded by the combinations of the spline second order. Therefore, $\nabla f$ is Lipschitz continuous -- that is $f$ is $\beta$-smooth \cite{rudin1976principles}.
    
\end{proof}

\begin{proposition}\label{prop 2}
    Composition of smooth functions in KAN preserves smoothness.
\end{proposition}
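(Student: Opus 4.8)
The plan is to reduce the claim to the elementary fact that a composition of two $C^2$ maps is again $C^2$, and to make the resulting bounds explicit via the chain rule applied to the layered structure of Eq.~\eqref{eq:KAN}. First I would isolate a single composition block: writing $v_q(x) = \sum_{r=1}^n \phi_{q,r}(x_r)$ for the inner aggregate and $\psi = \Phi_q$ for the outer univariate map, the object of interest is $F_q(x) = \psi(v_q(x))$, so that $f = \sum_{q=1}^{2n+1} F_q$. By Proposition~\ref{prop 1}, finite sums preserve smoothness, so it suffices to show each $F_q$ is $\beta_q$-smooth.

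Next I would compute the first and second partial derivatives by the chain rule. For the first order,
$$\frac{\partial F_q}{\partial x_j} = \psi'\!\left(v_q(x)\right)\, \phi_{q,j}'(x_j),$$
and since each $\phi_{q,r}$ depends on a single coordinate, the cross terms of the Hessian vanish, leaving
$$\frac{\partial^2 F_q}{\partial x_j\, \partial x_\ell} = \psi''\!\left(v_q(x)\right)\, \phi_{q,j}'(x_j)\, \phi_{q,\ell}'(x_\ell) + \delta_{j\ell}\,\psi'\!\left(v_q(x)\right)\, \phi_{q,j}''(x_j),$$
where $\delta_{j\ell}$ is the Kronecker delta. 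I would then argue boundedness of every factor. Because $\phi_{q,r}$ is $C^2$ on a compact domain (Assumption~\ref{assumptions}), the derivatives $\phi_{q,r}'$ and $\phi_{q,r}''$ are continuous and hence uniformly bounded. For the outer factors, the key observation is that $v_q$ is continuous and $\mathcal{X}$ is compact, so the image $v_q(\mathcal{X}) \subset \mathbb{R}$ is compact; restricting the $C^2$ function $\psi$ to this compact set makes $\psi'$ and $\psi''$ continuous and therefore bounded there.

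Combining these, every entry of the Hessian of $F_q$ is a finite sum of products of uniformly bounded continuous functions, so the Hessian is continuous and bounded in operator norm on $\mathcal{X}$. This makes $\nabla F_q$ Lipschitz, i.e.\ $F_q$ is $\beta_q$-smooth, and Proposition~\ref{prop 1} then yields that $f$ is $\beta$-smooth with $\beta = \sum_q \beta_q$.

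I expect the main obstacle to be the boundedness of the \emph{outer} second derivative $\psi''$. A generic $C^2$ function need not have bounded second derivative over an unbounded argument range, so the argument hinges on confining $\psi$ to the compact image $v_q(\mathcal{X})$. This is precisely where the compactness clause of Assumption~\ref{assumptions}, together with the continuity of the inner splines (continuous image of a compact set is compact), does the essential work; without it $\beta$-smoothness could fail even though the chain-rule expressions are formally valid.
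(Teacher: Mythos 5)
Your proof is correct and follows essentially the same route as the paper's: apply the chain rule to the composite $\psi(v_q(x))$ and bound the resulting second-derivative expression by the bounded derivatives of the inner and outer splines, then sum over $q$. Your version is in fact slightly more careful than the paper's, since you write the genuinely multivariate Hessian (with the Kronecker-delta term) and justify the boundedness of $\psi'$ and $\psi''$ via compactness of the image $v_q(\mathcal{X})$, whereas the paper works with a scalar second-derivative formula and simply postulates those bounds.
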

\begin{proof}
    The input $x_i$ is summed at every neuron in a KAN structure, and subsequently a univariate spline $\psi(x)$ is applied. Let the sum of the inner functions be denoted as $v$. Both $\phi$, and $\psi$ are twice continuously differentiable with bounded second order derivatives. Then, we can write, $|\phi_{k_i}''| \leq \beta_1$, $|\psi_{k}''| \leq \beta_2$, and the derivative of inner sum $v'(x)$ is bounded by some constant $B>0$.
    Using chain rule, the second derivative of the composite function $f(x) = \psi_k(v(x))$ can be expressed as,
    $$f''(x)=\psi_k''(v(x))\cdot [v'(x)]^2 + \psi_k'(v(x)) \cdot v''(x)$$
    Based on the above statement, 
        $$|f''(x)|\le \beta_2 B^2 + |\psi_k'(v(x))|\cdot \beta_v$$
    where, $\beta_v$ is an upper bound on $v''(x)$. Thus each composition $\psi_k(\sum_i \phi_{k_i}(x_i))$ is $\beta$-smooth. More generally, the entire KAN, $f$, is a nested composition of such units. Hence, if each layer's activation function is twice continuously differentiable, then the final composed function also stays $C^2$ smooth \cite{deboor1978splines}.

\end{proof}

\subsection{Proposed Fairness Analysis for KAN-based Adversarial Debiasing}
In this appendix section, we discuss a supplementary special case for analyzing the fairness bounds in an adversarial debiasing setup. Although this is not a part of our main algorithmic contributions, but serves to provide additional insight into the behavior of the framework under the following conditions. We constraint the adversary model, $g$, through gradient penalty in order to satisfy and hold the following case \cite{arjovsky2017wasserstein}. 
\\Let $(X, d_\mathcal{X})$ and $(Y, d_\mathcal{Y})$ be the metric spaces. Given, the KAN classifier function $f:X \to Y$ is $L$-Lipschitz, i.e. $d_\mathcal{Y}(f(x), f(y)) \leq L \cdot d_\mathcal{X}(x,y)$ for all $x,y \in X$. 

\begin{definition}[Wasserstein-1 distance]\label{def 1}
For all probability distributions $\mu$ and $\nu$ on a metric space $X$, the Wasserstein-1 distance between $\mu$ and $\nu$ is defined as
\[
W_1(\mu,\nu)
= \sup_{\|\varphi\|_{\mathrm{Lip}} \le 1}
\left( \mathbb{E}_{x \sim \mu}[\varphi(x)] - \mathbb{E}_{x \sim \nu}[\varphi(x)] \right),
\]
where the supremum is taken over all $1$-Lipschitz functions $\varphi : X \to \mathbb{R}$ \cite{villani2008optimal}.
\end{definition}

\begin{proposition}\label{prop 3}

    Let $\mu$, $\nu$ be two distributions on $X$, and $f:X \to Y$ be $L$-Lipschitz, then,
    $$W_1(f_\# \mu, f_\# \nu) \leq L\cdot W_1(\mu, \nu)$$
\end{proposition}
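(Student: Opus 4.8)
The plan is to work directly from the Kantorovich--Rubinstein dual characterization of $W_1$ supplied by Definition \ref{def 1}, rather than from the primal (coupling) formulation, since the dual form composes cleanly with pushforwards. First I would expand $W_1(f_\# \mu, f_\# \nu)$ as a supremum over $1$-Lipschitz test functions $\varphi : Y \to \mathbb{R}$, and then rewrite each expectation against the pushforward measures using the change-of-variables identity $\mathbb{E}_{y \sim f_\# \mu}[\varphi(y)] = \mathbb{E}_{x \sim \mu}[\varphi(f(x))]$, and likewise for $\nu$. This recasts the objective as a supremum of $\mathbb{E}_{x \sim \mu}[(\varphi \circ f)(x)] - \mathbb{E}_{x \sim \nu}[(\varphi \circ f)(x)]$ taken over all $1$-Lipschitz $\varphi$.

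The key step is a composition estimate: if $\varphi$ is $1$-Lipschitz on $Y$ and $f$ is $L$-Lipschitz, then $\varphi \circ f$ is $L$-Lipschitz on $X$, since $|\varphi(f(x)) - \varphi(f(x'))| \le d_\mathcal{Y}(f(x), f(x')) \le L\, d_\mathcal{X}(x,x')$. Consequently $L^{-1}(\varphi \circ f)$ is $1$-Lipschitz (for $L > 0$), hence an admissible test function in the dual formula for $W_1(\mu, \nu)$. Factoring out the constant $L$ and invoking Definition \ref{def 1} for the pair $\mu, \nu$ then bounds each term in the supremum by $L \cdot W_1(\mu, \nu)$; taking the supremum over all $\varphi$ delivers the claim.

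The only genuinely delicate points are bookkeeping rather than substance. I would dispatch the degenerate case $L = 0$ separately: there $f$ is constant, the two pushforwards coincide, and both sides vanish. I would also emphasize that the family $\{L^{-1}(\varphi\circ f) : \|\varphi\|_{\mathrm{Lip}} \le 1\}$ is in general only a \emph{subset} of all $1$-Lipschitz functions on $X$, so enlarging to the full class can only increase the supremum --- which is precisely the direction required for the inequality. I do not anticipate a serious obstacle; the main thing to get right is the orientation of this inclusion of test-function classes and the disciplined use of the dual formulation in place of the primal coupling definition.
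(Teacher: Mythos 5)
Your proposal is correct and follows essentially the same route as the paper's proof: both work from the Kantorovich--Rubinstein dual form in Definition~\ref{def 1}, observe that composing a $1$-Lipschitz test function on $Y$ with $f$ gives an $L$-Lipschitz function on $X$, rescale by $1/L$ to obtain an admissible test function for $W_1(\mu,\nu)$, and take the supremum. Your additions --- dispatching the degenerate case $L=0$, stating the pushforward change-of-variables identity explicitly, and noting that the rescaled compositions form only a subset of the $1$-Lipschitz class (which is the correct direction for the inequality) --- are careful refinements of details the paper leaves implicit, not a different argument.
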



where, $f_\# \mu$ and $f_\#\nu$ are the pushforward measures. 
\begin{proof}
    For a 1-Lipschitz function $g$ on $Y$, the composition $g \circ f$ is L-Lipschitz on $X$ as,
    \begin{align}
    g(f(x)) - g(f(y)) &\leq \|g\|_{\mathrm{Lip}=1} \, d_\mathcal{Y}(f(x), f(y)) \notag \\
                      &\leq 1 \cdot L \, d_\mathcal{X}(x, y) \notag \\
                      &\leq L \, d_\mathcal{X}(x, y)
    \end{align}
    Let $\varphi = \frac{1}{L}g\circ f$, or technically speaking $\varphi = \frac{1}{L}g(f(x))$. Then $\varphi$ is 1-Lipschitz on $X$ and by Definition \ref{def 1}, 
    $$\mathbb{E}_{\mu}[g\circ f]- \mathbb{E}_{\nu}[g\circ f] = L(\mathbb{E}_{\mu}[\varphi]- \mathbb{E}_{\nu}[\varphi]) \leq LW_1(\mu, \nu)$$
    Next, taking the supremum over all 1-Lipschitz $g$ on $Y$ yields,
    $$W_1(f_\#\mu, f_\#\nu) = \text{sup}_{\|g\|_{\text{Lip}\leq 1}}(\mathbb{E}_{\mu}[g\circ f]- \mathbb{E}_{\nu}[g\circ f]) \leq LW_1(\mu,\nu)$$

\end{proof}

\begin{corollary}

Let $Z \in \{ 0, 1\}$ be a sensitive attribute, then,
$\mu := \mathbb{P}(X|Z=0)$, $\nu := \mathbb{P}(X|Z=1)$ are the conditional distributions of the input features $X$ given $Z=0$ and $Z=1$ respectively. For any KAN, $f$, with Lipschitz constant $L_f$, in order to obtain a 1-Lipschitz adversary $g$, the empirical fairness objective for the adversary model can be stated as,
$$L_{adv}(g,f) := \mathbb{E}[g(f(X))|Z=0] - \mathbb{E}[g(f(X))|Z=1].$$
By Proposition \ref{prop 3}, this difference is bounded: $|L_{adv}(g,f)| \leq L_f \cdot W_1(\mu, \nu)$. Hence, maximizing $L_{adv}(g,f)$ over 1-Lipschitz $g$ yields $W_1(f_\#\mu, f_\#\nu)$.
\end{corollary}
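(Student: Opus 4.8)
The plan is to reduce the corollary to two ingredients that are already in place: the pushforward change-of-variables identity and the Kantorovich--Rubinstein dual formula of Definition~\ref{def 1}, with Proposition~\ref{prop 3} supplying the final quantitative bound. The central observation is that the adversary objective $L_{adv}(g,f)$ is exactly the difference of expectations of $g$ taken under the two \emph{pushforward} measures $f_\#\mu$ and $f_\#\nu$ on the output space $Y$; consequently, optimizing over $1$-Lipschitz $g$ is precisely the dual computation of a Wasserstein-$1$ distance in $Y$, and the bound is then a one-line application of Proposition~\ref{prop 3}.

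First I would rewrite the objective using the defining property of the pushforward. Since $\mu = \mathbb{P}(X\mid Z=0)$ and $f_\#\mu(A) = \mu(f^{-1}(A))$, the change-of-variables identity gives $\mathbb{E}[g(f(X))\mid Z=0] = \mathbb{E}_{Y\sim f_\#\mu}[g(Y)]$, and likewise $\mathbb{E}[g(f(X))\mid Z=1] = \mathbb{E}_{Y\sim f_\#\nu}[g(Y)]$. Substituting these into the definition of $L_{adv}$ yields
\[
L_{adv}(g,f) = \mathbb{E}_{Y\sim f_\#\mu}[g(Y)] - \mathbb{E}_{Y\sim f_\#\nu}[g(Y)].
\]
Second, I would take the supremum of this expression over all $1$-Lipschitz $g:Y\to\mathbb{R}$. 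By Definition~\ref{def 1} applied to the measures $f_\#\mu$ and $f_\#\nu$ on $Y$, this supremum equals $W_1(f_\#\mu, f_\#\nu)$, which is the claimed maximization identity. Third, for the stated inequality I would chain this with Proposition~\ref{prop 3}: for any fixed $1$-Lipschitz $g$ we have $L_{adv}(g,f)\le W_1(f_\#\mu,f_\#\nu)$ directly from Definition~\ref{def 1}, and since $-g$ is also $1$-Lipschitz with $L_{adv}(-g,f) = -L_{adv}(g,f)$, the same bound applies to $-L_{adv}(g,f)$; hence $|L_{adv}(g,f)|\le W_1(f_\#\mu,f_\#\nu)\le L_f\cdot W_1(\mu,\nu)$, where the last step is exactly Proposition~\ref{prop 3}.

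I expect the main obstacle to be the measure-theoretic justification of the change-of-variables step rather than the algebra: one must confirm that $f$ is measurable (immediate from the continuity established in Lemma~\ref{lemma1}) so that $f_\#\mu$ and $f_\#\nu$ are genuine probability measures on $Y$, and that the supremum in Definition~\ref{def 1} is realized over $1$-Lipschitz functions on the output space rather than on $X$. A secondary subtlety is the empirical-versus-population gap signalled by the word ``empirical'' in the statement: the identity above is stated for the population conditionals, and one would note that the finite-sample estimator converges to it, with the $1$-Lipschitz constraint on $g$ (enforced in practice via the gradient penalty referenced before Definition~\ref{def 1}) being precisely what validates the dual computation.
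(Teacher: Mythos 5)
Your proposal is correct and follows essentially the same route as the paper: rewrite $L_{adv}(g,f)$ as a difference of expectations under the pushforwards $f_\#\mu$ and $f_\#\nu$, identify its supremum over $1$-Lipschitz $g$ with $W_1(f_\#\mu, f_\#\nu)$ via the Kantorovich--Rubinstein duality of Definition~\ref{def 1}, and invoke Proposition~\ref{prop 3} for the bound $W_1(f_\#\mu,f_\#\nu)\le L_f\cdot W_1(\mu,\nu)$. Your explicit handling of the absolute value via the observation that $-g$ is also $1$-Lipschitz is a small detail the paper leaves implicit, but it does not constitute a different argument.
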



\subsection{Smoothness-Dimension Tradeoff in KAN}
In \cite{samadi2024smooth} it states that if the target KAN function, $f \in C^k(\mathbb{R}^n)$ is smooth, then approximating it via a finite composition of smooth, lower-dimensional component functions, $g \in C^{k'}(\mathbb{R}^{n'})$, may not be sufficient for universal approximation. The limitation stems from the Vitushkin's theorem, that implies that such a representation may require the component functions to become oscillatory in order to fully capture the complexity of $f$. Formally, the univariate mappings need to satisfy, $k'n' \leq kn$. This helps us understand the oscillatory loss function convergence behavior in the conducted ablation study on increasing the $k$ value, owing to the dimensionality-smoothness tradeoff. However, in our experiments, we prioritize training stability and convergence by restricting the spline functions to remain sufficiently smooth ($C^2$), and focus on models with spline knot complexity, $k$, of 3. This may limit expressiveness of KANs, but empirically aligns with the adversarial learning framework.

\section{Additional Experimental Results}
All the experiments, excluding those involving $D_{(2)}$ datasets, have been conducted on MacBook Air with Apple M3 chip, and 24GB unified memory, running macOS. Large scale training on the $D_{(2)}$ datasets were performed on a remote Linux server with four NVIDIA RTX A6000 GPUs (49GB each), using CUDA 12.4. We have used Python-based NumPy and PyTorch packages to train the models. The hyperparameter configurations used for all experiments conducted on the proposed models are summarized in Table \ref{tab:hyperparameters}.

\begin{table*}[ht]
\centering
\renewcommand{\arraystretch}{1.3}
\begin{tabular}{l|l|c|c|c|c|c|c|c|c|c}
\toprule
\textbf{Model} & \textbf{Optimizer} & \textbf{$\text{LR}_{clf}$} & \textbf{$\text{LR}_{adv}$} & \textbf{$\text{Width}_{clf}$} & \textbf{$\text{Width}_{adv}$} & \textbf{$\text{L}_2$ Penalty} & \textbf{$\text{L}_1$ Penalty} & \textbf{$\lambda_{(1)}$} & \textbf{$\lambda_{(2)}$} & \textbf{$\eta$} \\
\midrule
 & Adam   & 0.01 & 0.1 & [140, 64, 32, 1] & [1, 32, 2] & 0.001 & 0.001 & 0.21 & 0.32 & 0.04 \\
$\text{K}_{(1)}$ & OAdam  & 0.01 & 0.1 & [140, 32, 8, 1] & [1, 32, 2] & 0.0001 & 0.01 & 0.24 & 0.33 & 0.04 \\
 & ADOPT  & 0.01 & 0.1 & [140, 64, 32, 1] & [1, 32, 2] & 0.001 & 0.01 & 0.25 & 0.37 & 0.04\\

\midrule
 & Adam  & 0.01 & 0.1 & [140, 64, 8, 1] & [1, 32, 2] & 0.0001 & 0.01 & 0.30 & 0.45 & 0.04 \\
$\text{K}_{(2)}$ & OAdam  & 0.01 & 0.01 & [140, 32, 8, 1] & [1, 32, 2] & 0.0001 & 0.01 & 0.31 & 0.43 & 0.04\\
 & ADOPT  & 0.01 & 0.01 & [140, 32, 8, 1] & [1, 32, 2] & 0.0001 & 0.01 & 0.36 & 0.49 & 0.04 \\

\bottomrule
\end{tabular}
\label{tab:hyperparameters}
\captionsetup{font=small, labelfont=bf}
\caption{Hyperparameter list for proposed KAN-based adversarial debiasing frameworks, $K_{(1)}$ (trained on $D_{(1)}$ dataset), and $K_{(2)}$ (trained on $D_{(2)}$ dataset) with an adaptive $\lambda$-update policy under three different optimizers (Adam, OAdam, and ADOPT). Width denotes the layer sizes for the classifier/adversary; $\text{LR}_{clf}$ $\&$ $\text{LR}_{adv}$ are learning rates; $L_2$ $\&$ $L_1$ penalty are regularization coefficients; $\lambda_1$ $\&$ $\lambda_2$ are the final fairness penalties obtained for Low Inc. status $\&$ First-Gen. flag sensitive features respectively; $\eta$ is the fairness learning rate.}
\end{table*}

\subsection{Justification of Optimizers}
For simulations, we compare the proposed model performance across three different types of optimizers - Adam, Optimistic Adam (OAdam), and ADOPT. This comparison is motivated by the understanding that the optimizer selection significantly impacts convergence and stability in adversarial training setups \cite{liu2020loss}. Adam is included as it is a strong baseline for general NN training. Whereas, OAdam modifies Adam with an optimistic mirror-descent step, specifically designed for adversarial problems. Similarly, ADOPT tries to achieve optimal convergence rate adaptively, making it suitable for adversarial training. By conducting an ablation study we explore how different optimizers influence training dynamics and fairness outcomes.

\subsection{Justification of Sensitive Attributes}

To understand the effectiveness of the debiasing approach, we examine both the initial and the final stages of model training. Through these experiments, we compare the performance of pre-trained and post-trained models. The \textit{pre-trained models} correspond to the initial stage of training, where the classifier and adversary are trained independently prior to the debiasing. Here, the classifier is pre-trained to predict the target variable, while the adversary is pre-trained to predict the sensitive attributes from the classifier's final prediction. It serves as a baseline, allowing us to observe the inherent bias before incorporating the debiasing algorithm. Subsequently, the \textit{post-trained models} correspond to the comprehensive adversarial debiasing training process using the adaptive $\lambda$ policy mechanism. In Figure \ref{fig:kdeplots}, we showcase the Kernel Density Estimation (KDE) Plots for both pre-training, and post-training version of the proposed adversarial framework. In the KDE plots, the effective debiasing is demonstrated by increased overlapping between the curves, indicating reduced influence of the sensitive features on the model predictions.
Table \ref{tab:pre-post-perf} showcases the fairness metrics, $p\%$-Rule and Demographic Parity (DP), for both pre-trained and post-trained models for the KAN-based adversarial framework using $k=3$ spline knot complexity. This demonstrates the disparity in outcomes with respect to the two binary sensitive attributes (Low Inc. status, and First-Gen. flag) in the pre-training phase. It further highlights the post-training improvements in fairness metrics, thereby validating the effectiveness of the proposed adversarial learning framework.

\begin{table*}[ht]
\centering
\renewcommand{\arraystretch}{1.3}
\begin{tabular}{l | c c c c | c c c c}
\toprule
\textbf{Optimizer} &
\multicolumn{4}{c|}{\textbf{Pre-Training}} &
\multicolumn{4}{c}{\textbf{Post-Training}} \\
\cmidrule(lr){2-5} \cmidrule(lr){6-9}
& $p\%-\text{Rule}_{(1)}$ & $p\%-\text{Rule}_{(2)}$  & $\text{DP}_{(1)}$ & $\text{DP}_{(2)}$
& $p\%-\text{Rule}_{(1)}$  & $p\%-\text{Rule}_{(2)}$  & $\text{DP}_{(1)}$ & $\text{DP}_{(2)}$ \\
\midrule
Adam   & 29.67 & 26.82 & 0.25 & 0.26 & 85.79 & 89.65 & 0.028 & 0.035 \\
OAdam  & 31.58 & 26.38 & 0.228 & 0.244 & 90.39 & 92.99 & 0.072 & 0.084 \\
ADOPT  & 21.22 & 19.21 & 0.288 & 0.304 & 89.21 & 90.92 & 0.047 & 0.055 \\
\bottomrule
\end{tabular}
\label{tab:pre-post-perf}
\captionsetup{font=small, labelfont=bf}
\caption{Hyperparameter list for proposed KAN-based adversarial debiasing frameworks, $K_{(1)}$ (trained on $D_{(1)}$ dataset), and $K_{(2)}$ (trained on $D_{(2)}$ dataset) with an adaptive $\lambda$-update policy under three different optimizers (Adam, OAdam, and ADOPT). Width denotes the layer sizes for the classifier/adversary; $\text{LR}_{clf}$ $\&$ $\text{LR}_{adv}$ are learning rates; $L_2$ $\&$ $L_1$ penalty are regularization coefficients; $\lambda_1$ $\&$ $\lambda_2$ are the final fairness penalties obtained for Low Inc. status $\&$ First-Gen. flag sensitive features respectively; $\eta$ is the fairness learning rate.}
\end{table*}

\begin{figure*} 
    \centering
        \captionsetup{font=small, labelfont=bf, justification=centering} 
        \makebox[0.1cm]{}\includegraphics[width=0.98\textwidth]{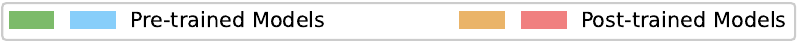}
    \hfill
        \subfloat[Training with Adam optimizer]{%
            \begin{subfigure}{0.5\linewidth}
                \centering
                \includegraphics[width=\linewidth]{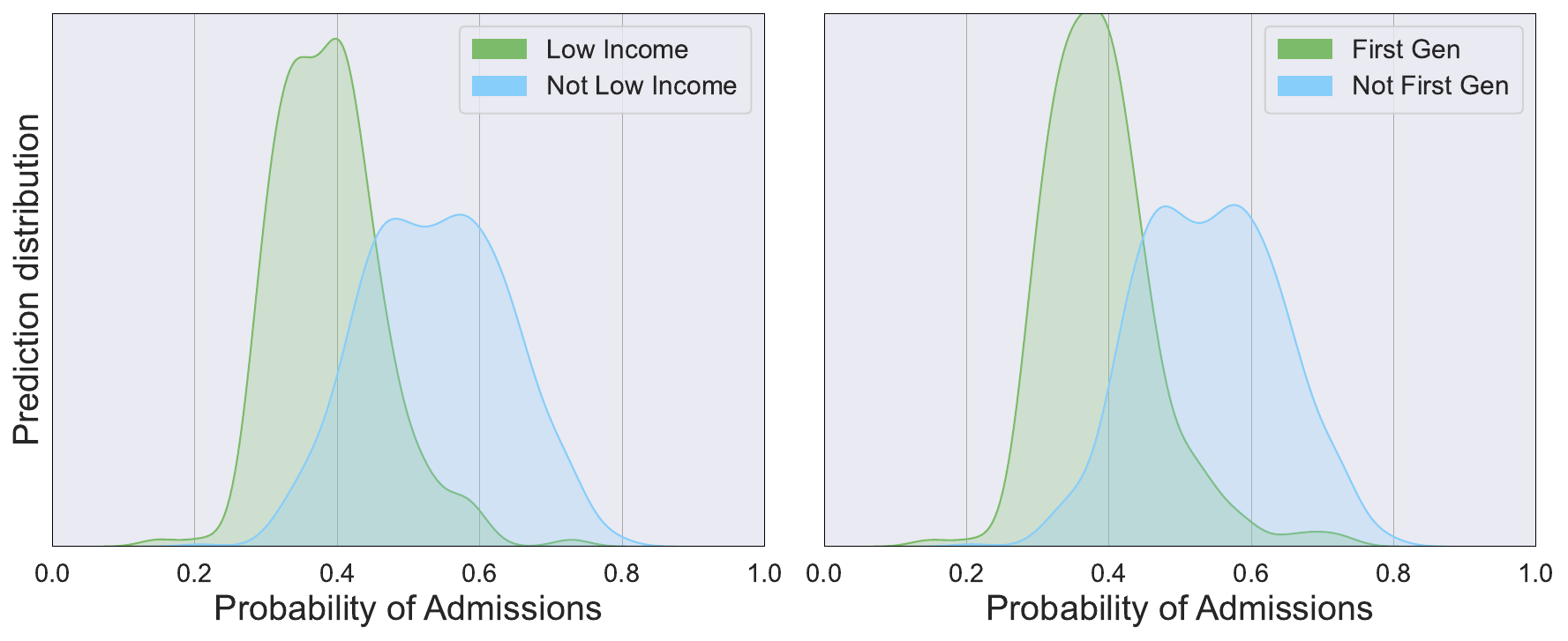}
            \end{subfigure}
            \begin{subfigure}{0.5\linewidth}
                \centering
                \includegraphics[width=\linewidth]{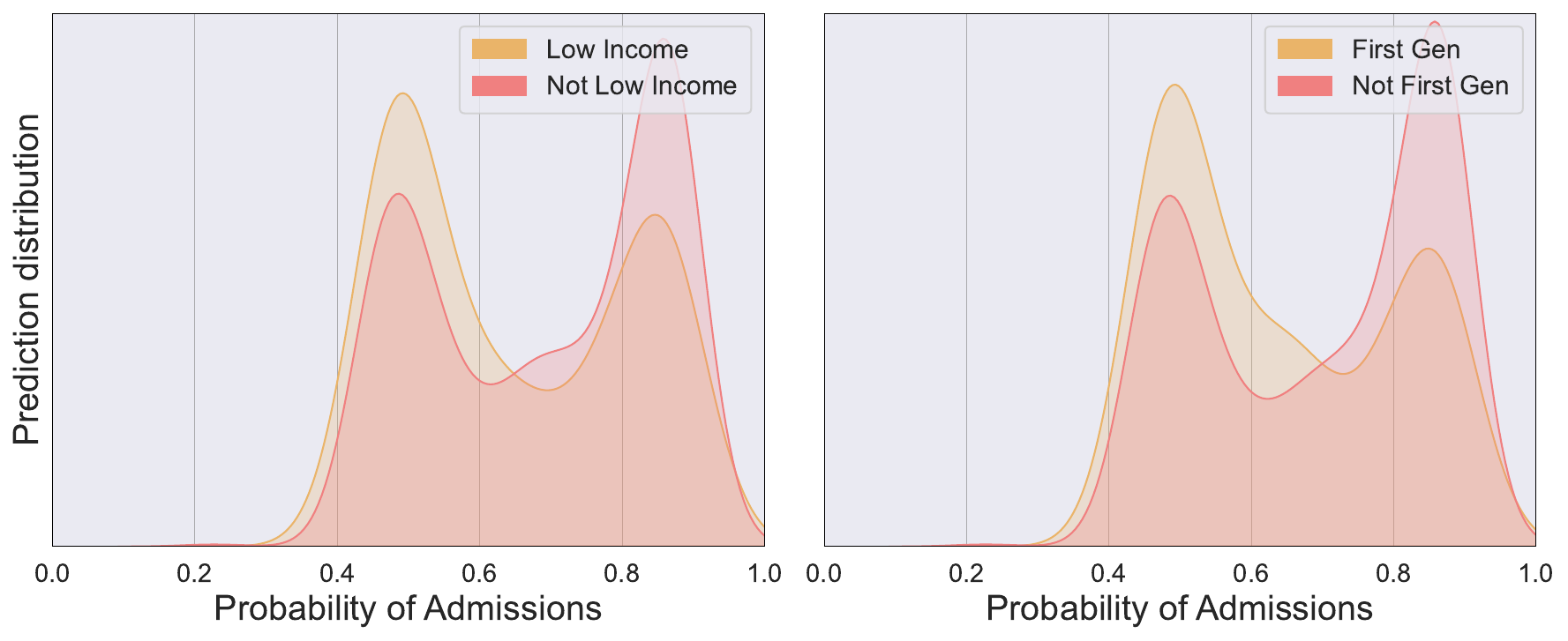}
            \end{subfigure}
        }
    \hfill
        \subfloat[Training with OAdam optimizer]{%
            \begin{subfigure}{0.5\linewidth}
                \centering
                \includegraphics[width=\linewidth]{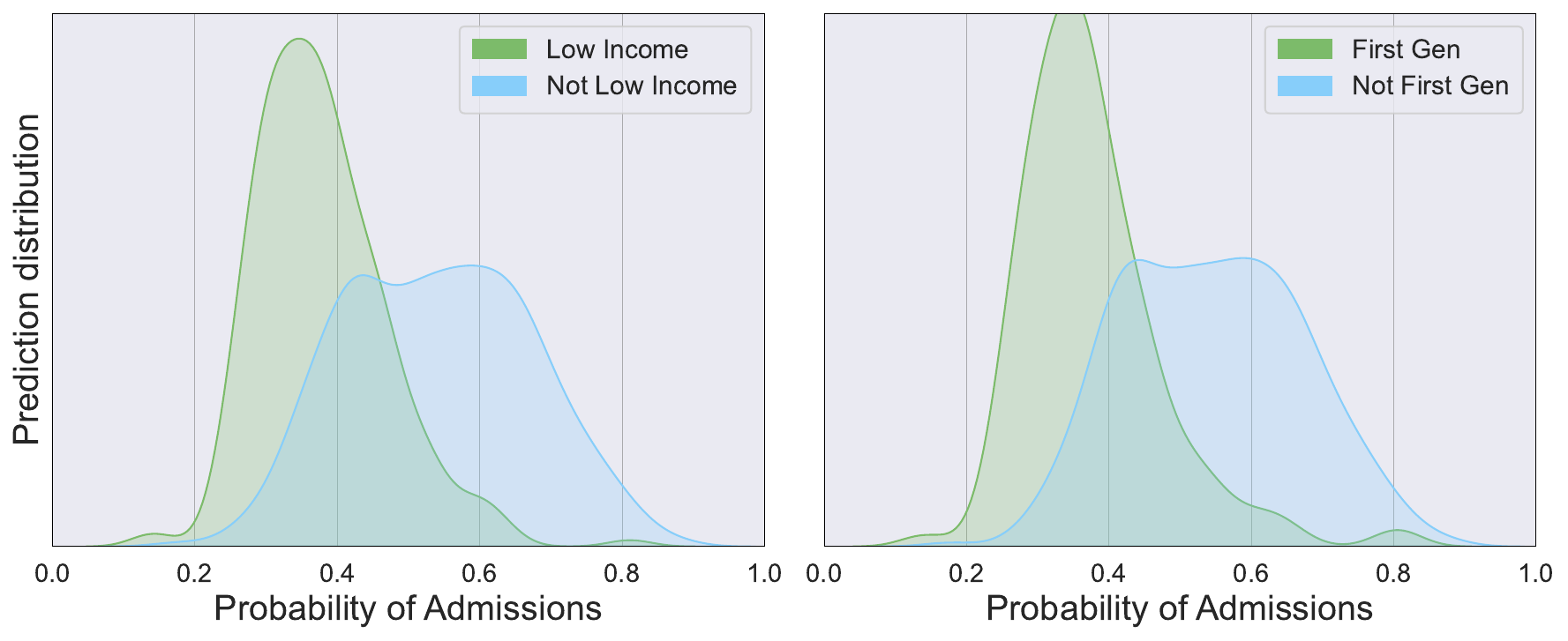}
            \end{subfigure}
            \begin{subfigure}{0.5\linewidth}
                \centering
                \includegraphics[width=\linewidth]{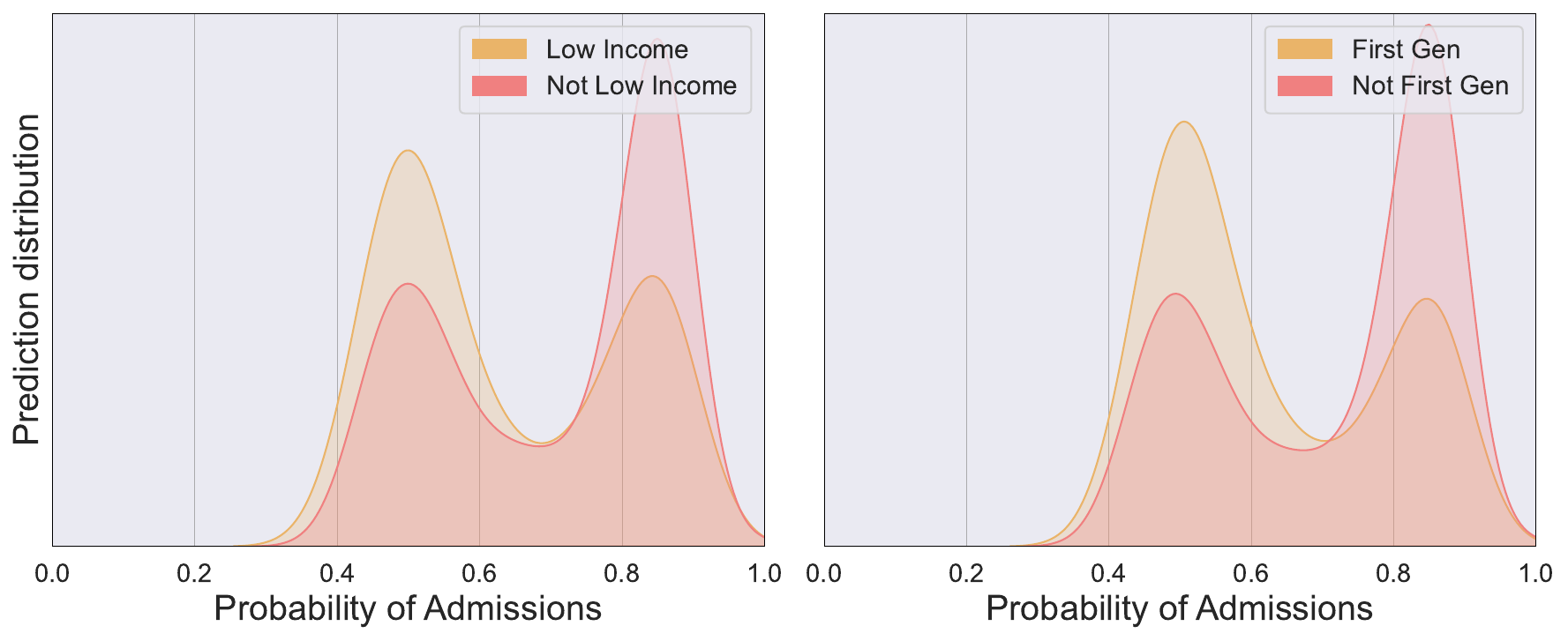}
            \end{subfigure}
        }
    \hfill
        \subfloat[Training with ADOPT optimizer]{%
            \begin{subfigure}{0.5\linewidth}
                \centering
                \includegraphics[width=\linewidth]{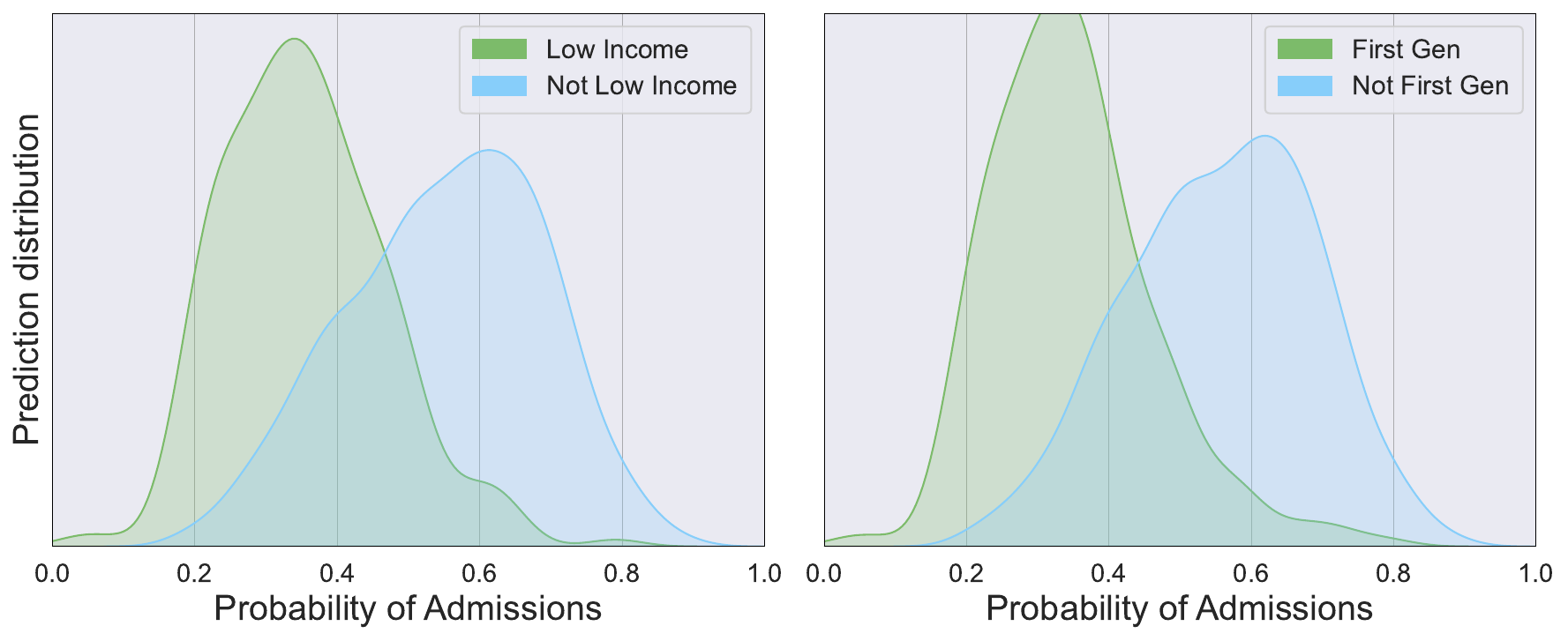}
            \end{subfigure}
            \begin{subfigure}{0.5\linewidth}
                \centering
                \includegraphics[width=\linewidth]{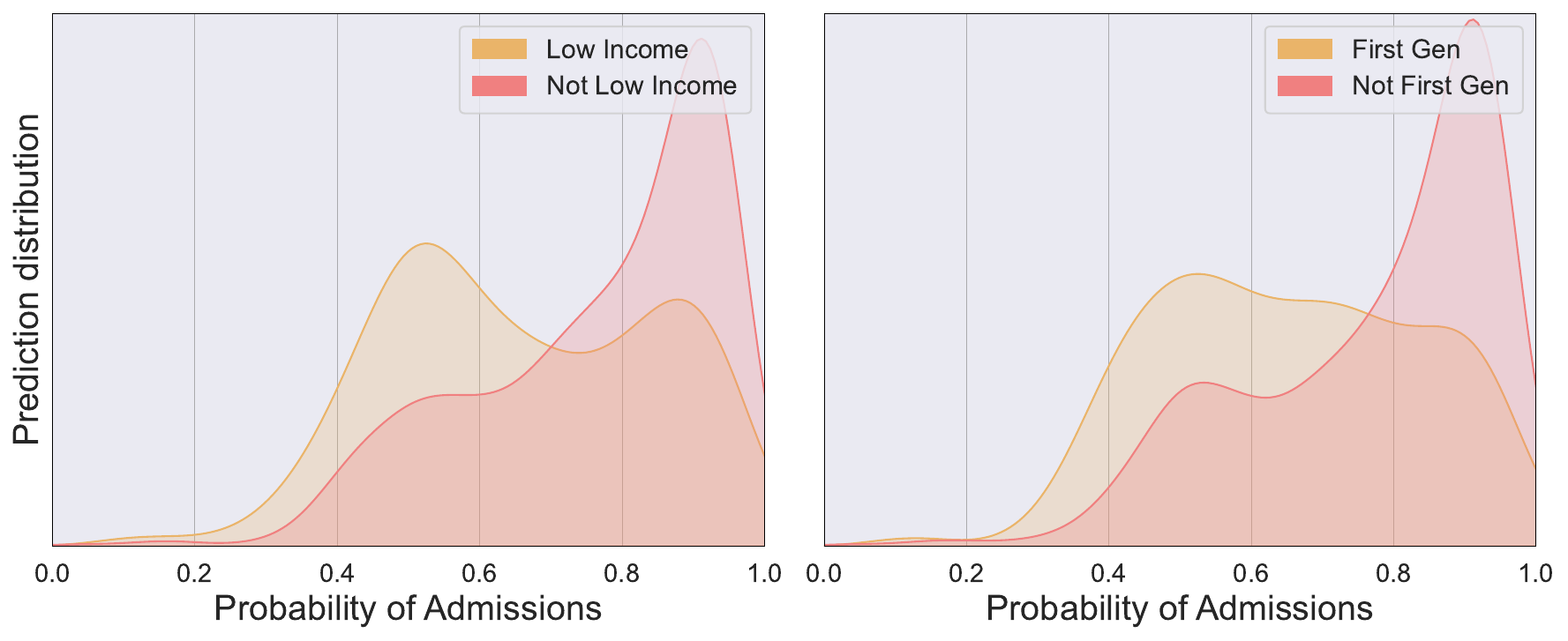}
            \end{subfigure}
        }

  \caption{The Kernel Density Estimate (KDE) plots for KAN-based adversarial debiasing model, using adaptive $\lambda$ policy. It uses the $k=3$ spline knot complexity of KAN models (for both classifier and adversary models), and is trained on $D_{(1)}$ dataset. It depicts the pre-trained and post-trained models, highlighting the bias present in the model predictions.}
  \label{fig:kdeplots} 
\end{figure*}

\end{document}